\newcommand{\newref}[2][]{\hyperref[#2]{#1~\ref*{#2}}}
\renewcommand{\eqref}[1]{\hyperref[#1]{(\ref*{#1})}}
\theoremstyle{plain}
\newtheorem{theorem}{Theorem}[section]
\newtheorem{lemma}[theorem]{Lemma}
\newtheorem{corollary}[theorem]{Corollary}
\newtheorem{definition}[theorem]{Definition}
\theoremstyle{definition}
\newcommand{\A}{W}
\renewcommand{\AA}{\widetilde{W}}
\newcommand{\burg}{D_{\ell\text{d}}}
\newcommand{\von}{D_{\text{vN}}}
\newcommand{\fvN}{f_{\text{vN}}}
\newcommand{\ffrob}{f_{\text{frob}}}
\newcommand{\frob}{D_{\text{frob}}}
\newcommand{\dist}{d_\A}
\newcommand{\argmin}{\operatornamewithlimits{argmin}}
\newcommand{\X}{X}
\newcommand{\XX}{\widetilde{X}}
\newcommand{\C}{C}
\renewcommand{\S}{{\mathcal S}}
\newcommand{\D}{{\mathcal D}}
\renewcommand{\b}{b}
\newcommand{\K}{K}
\newcommand{\Z}{Z}
\newcommand{\x}{\bm{x}}
\newcommand{\xx}{\tilde{\bm{x}}}
\newcommand{\y}{\bm{y}}
\renewcommand{\v}{\bm{v}}
\newcommand{\q}{\bm{q}}
\newcommand{\e}{\bm{e}}
\DeclareMathOperator{\Tr}{tr}
\DeclareMathOperator{\tr}{tr}
\numberwithin{equation}{section}
\begin{document}

\title{Metric and Kernel Learning using a Linear Transformation}
\author{Prateek Jain \and Brian Kulis \and Jason V. Davis \and Inderjit S. Dhillon}
\maketitle

\begin{abstract}

Metric and kernel learning are important in several machine 
learning applications.  However, most existing metric learning algorithms are
limited to learning metrics over low-dimensional data, while existing kernel
learning algorithms are often limited to the transductive setting and do not 
generalize to new data points.  In this paper, we study metric learning as a
problem of learning a linear transformation of the input data. We show that for
high-dimensional data, a particular framework for
learning a linear transformation of the data based on the LogDet divergence
can be efficiently kernelized to learn a metric (or equivalently, a kernel
function) over an arbitrarily high dimensional space.  We further demonstrate
that a wide class of convex loss functions for learning linear
transformations can similarly be kernelized, thereby considerably expanding the potential
applications of metric learning. We demonstrate our learning approach by applying it to large-scale real world problems in computer 
vision and text mining.

\end{abstract}

\section{Introduction}
One of the basic requirements of many machine learning algorithms (e.g., semi-supervised clustering algorithms, nearest neighbor classification algorithms)  is the
ability to compare two objects to compute a similarity or distance between
them.  In many cases, off-the-shelf distance or similarity functions such as the Euclidean
distance or cosine similarity are used; for example, in text retrieval applications, the cosine
similarity is a standard function to compare two text documents.  However,
such standard distance or similarity functions are not appropriate for all problems.

Recently, there has been significant effort focused on learning how to
compare data objects.  One approach has been to learn a distance metric between
objects given additional side information such as pairwise similarity and
dissimilarity constraints over the data.  

One class of distance metrics that has shown excellent
generalization properties is the \textit{Mahalanobis distance}
function~\cite{itml,xing,wein,roweis,shalev}.  The Mahalanobis distance can be
viewed as a method in which data is subject to a
{\em linear transformation}, and then distances in this transformed space are
computed via the
standard squared Euclidean distance.  Despite their simplicity and
generalization ability, Mahalanobis distances suffer
from two major drawbacks: 1) the number of parameters grows quadratically
with the dimensionality of the data, making it difficult to learn distance
functions over high-dimensional data, 2) learning a linear transformation is
inadequate for data sets with non-linear decision boundaries.  

To address the latter shortcoming, {\it kernel learning} algorithms typically
attempt to learn a 
kernel matrix over the data.  
Limitations of linear methods can be overcome by
employing a non-linear input kernel, which effectively maps the data non-linearly to a high-dimensional feature space.  However, many existing kernel learning
methods are still limited in that the learned kernels do not generalize to
new points~\cite{kwok:tsang:03,kulis,tsuda}.  These methods are restricted
to learning in the transductive 
setting where all the data (labelled and unlabeled) is assumed to be given
upfront.  There has been some work on learning kernels that generalize to new
points, most notably work on hyperkernels~\cite{ong03}, but the resulting
optimization problems are expensive and cannot be scaled to large or even
medium-sized data sets. 

In this paper, we explore metric learning with linear transformations over
arbitrarily high-dimensional spaces; as we will see, this is equivalent to
learning a parameterized kernel function 
$\phi(\bm{x})^T W \phi(\bm{y})$ given an input kernel function
$\phi(\bm{x})^T \phi(\bm{y})$.
In the first part of the paper, we focus on a particular loss function called the LogDet divergence, for
learning the positive definite matrix $\A$.  This loss function is advantageous for several
reasons: it is defined only over positive definite matrices, which makes
the optimization simpler, as we will be able to effectively ignore the
positive definiteness constraint on $\A$.  The loss function has precedence in
optimization~\cite{fletcher} and statistics~\cite{stein}.  An important advantage of our method is that the proposed optimization algorithm is scalable to very large data sets of the order of millions of data
objects.  But perhaps most importantly,
the loss function permits efficient kernelization, allowing the learning of a linear transformation in kernel space. As a result, unlike transductive kernel learning methods, our method easily
handles out-of-sample extensions, i.e., it can be applied to unseen data.

Later in the paper, we extend our result on kernelization of the LogDet
formulation to other convex loss functions for learning $W$, 
and give conditions for which we are able to compute and evaluate the learned
kernel functions.  Our result is akin to the representer theorem for
reproducing kernel Hilbert spaces, where the optimal parameters can be
expressed purely in terms of the training data.  In our case, even though the
matrix $W$ may be infinite-dimensional, it can be fully represented in terms of
the constrained data points, making it possible to compute the learned kernel
function value over arbitrary points.  

Finally, we apply our algorithm to a number of challenging learning problems,
including ones from the domains of computer vision and text mining.  Unlike
existing 
techniques, we can learn linear transformation-based distance  
or kernel functions over these domains, and we show that the resulting
functions lead to improvements over state-of-the-art techniques for a variety
of problems.  


\vspace*{-10pt}
\section{Related Work}
Most of the existing work in metric learning has been done in
the Mahalanobis distance (or metric) learning paradigm, which has
been found to be a sufficiently powerful class of metrics for a variety of 
different data.  One of the earliest papers on metric learning \cite{xing}
proposes a semidefinite programming formulation under similarity and
dissimilarity constraints for learning a Mahalanobis distance, but the
resulting formulation is slow to optimize and has been outperformed by more
sophisticated techniques.  More recently, \cite{wein} formulate the metric
learning problem in a large margin setting, with a focus on $k$-NN
classification. They also formulate the problem as a semidefinite programming
problem and consequently solve it using a method that combines sub-gradient descent
and alternating projections. \cite{roweis} proceed to learn a linear
transformation in the fully supervised setting.  Their formulation seeks to
`collapse classes' by constraining within-class distances to be zero while
maximizing between-class distances.  While each of these algorithms was
shown to yield improved classification performance over the baseline metrics,
their constraints do not generalize outside of their particular problem
domains; in contrast, our approach allows arbitrary linear constraints on the
Mahalanobis matrix.  Furthermore, these algorithms all require eigenvalue
decompositions or semi-definite programming, an operation that is cubic in the dimensionality of the data. 

Other notable work where the authors present methods for learning
Mahalanobis metrics includes \cite{shalev} (online metric learning),
Relevant Components Analysis (RCA) \cite{rca} (similar to discriminant
analysis), locally-adaptive discriminative methods \cite{hastie}, and
learning from relative comparisons \cite{joachim}.  In particular, the
method of \cite{shalev} provided the first demonstration of Mahalanobis
distance learning in kernel space.  Their construction, however, is expensive
to compute, requiring cubic time per iteration to update the parameters.  As
we will see, our LogDet-based algorithm can be implemented more efficiently.

Non-linear transformation based metric learning methods have also been proposed,
though these methods usually suffer from suboptimal performance,
non-convexity, or computational complexity. Some example methods
include neighborhood component analysis (NCA) \cite{nca} that learns a
distance metric specifically for nearest-neighbor based
classification; the convolutional neural net based method of
\cite{chopra}; and a general Riemannian metric learning method
\cite{lebanon}.

There have been several recent papers on kernel learning.  As mentioned in
the introduction, much of the research is limited to learning in the
transductive setting, e.g. ~\cite{kwok:tsang:03,kulis,tsuda}.  Research on
kernel learning that does generalize to new data points includes 
multiple kernel learning~\cite{lanck}, where a linear combination of base 
kernel functions are learned; this approach has proven to be useful for a
variety of problems, such as object recognition in
computer vision.  Another approach to kernel learning is to use
hyperkernels~\cite{ong03}, which consider functions between kernels,
and learn in the appropriate reproducing kernel Hilbert space between such
functions.  In both cases, semidefinite programming is used, making the
approach impractical for large-scale learning problems.  Recently, some work
has been done on making hyperkernel learning more efficient via second-order
cone programming~\cite{hyperkernel_socp}, however this formulation still
cannot be applied to large data sets.  Concurrent to our work in showing
kernelization for a wide class of convex loss functions, a recent paper
considers kernelization of other Mahalanobis distance learning algorithms
such as LMNN and NCA~\cite{arxiv}.  The latter paper, which appeared after the
conference version of the results in our paper, presents a representer-type
theorem and can be seen as complementary to the general kernelization results
(see Section~\ref{sec:loss}) we present in this paper. 

The research in this paper extends work done in~\cite{itml},~\cite{kulis},
and~\cite{jasonkdd}.  While the focus in~\cite{itml} and~\cite{jasonkdd} was
solely on the LogDet divergence, in this work we characterize kernelization
of a wider class of convex loss functions.  Furthermore, we provide a more
detailed analysis of kernelization for the Log Determinant loss, and include
experimental results on large scale kernel learning.  We extend the work
in~\cite{kulis} to the inductive setting; the main goal in \cite{kulis} was
to demonstrate the computational benefits of using the LogDet and
von Neumann divergences for learning low-rank kernel matrices. Finally in this paper, we do not consider online models for metric and kernel learning, however interested readers can refer to \cite{onlinemetric_nips}. 


\newcommand{\dlr}{IPLR}
\section{Metric and Kernel Learning via the LogDet Divergence}\label{sec:alg}
In this section, we introduce the LogDet formulation for linearly transforming the data given a set of pairwise distance constraints.  As
discussed below, this is equivalent to a Mahalanobis metric learning
problem.   
We then discuss kernelization issues of the formulation
and present efficient optimization algorithms.  Finally, we address
limitations of the method when the amount of training data is large, and
propose a modified algorithm to efficiently learn a kernel under such
circumstances.

\subsection{Mahalanobis Distances and Parameterized Kernels}
\label{sec:itml_prob}
First we introduce the framework for metric and kernel learning that is
employed in this paper.  Given a data set of objects $X = [\bm{x}_1,
..., \bm{x}_n], \bm{x}_i\in\mathbb{R}^d$ (when working in kernel space, the data matrix will be
represented as $X = [\phi(\bm{x}_1), ..., \phi(\bm{x}_n)]$, where $\phi$ is
the mapping to feature space), 
we are interested in finding an appropriate distance function to compare two
objects.  We consider the Mahalanobis distance, parameterized by a positive
definite matrix $\A$; the squared distance between two points $\bm{x}_i$ and
$\bm{x}_j$ is given by
\begin{displaymath}
\dist(\bm{x}_i,\bm{x}_j) = (\bm{x}_i - \bm{x}_j)^T \A (\bm{x}_i - \bm{x}_j).
\end{displaymath}
This distance function can be viewed as learning a linear transformation of
the data and measuring the squared Euclidean distance in the transformed
space.  This is seen by factorizing the matrix $\A = G^T G$ and observing that
$\dist(\bm{x}_i,\bm{x}_j) = \|G\bm{x}_i - G \bm{x}_j\|_2^2$.  However, if the data
is not linearly separable in the input space, then the resulting distance
function may not be powerful enough for the desired application.  As a
result, we are interested in working in kernel space; that is, we can express
the Mahalanobis distance in kernel space after applying an appropriate
mapping $\phi$ from input to feature space:
\begin{displaymath}
d_\A(\bm{x}_i,\bm{x}_j) = (\phi(\bm{x}_i) - \phi(\bm{x}_j))^T \A (\phi(\bm{x}_i) - \phi(\bm{x}_j)).
\end{displaymath}
As is standard with kernel-based algorithms, we require that this distance be
computable given the ability to compute the kernel function
$\kappa_0(\bm{x},\bm{y}) = \phi(\bm{x})^T \phi(\bm{y})$.  We can
therefore equivalently pose the problem as learning a parameterized kernel
function $\kappa(\bm{x},\bm{y})=\phi(\bm{x})^T \A \phi(\bm{y})$ given some input kernel function
$\kappa_0(\bm{x},\bm{y})=\phi(\bm{x})^T \phi(\bm{y})$.

To learn the resulting metric/kernel, we assume that we are given
constraints on the desired distance function.  In this paper, we assume that
pairwise similarity and dissimilarity constraints are given over the
data---that is, pairs of points that should be similar under the learned
metric/kernel, and pairs of points that should be dissimilar under the
learned metric/kernel.
Such constraints are natural in many settings; for example, given class labels over
the data, 
points in the same class should be similar to one another and dissimilar to
points in different classes.  However, our approach is general and can
accommodate other potential constraints over the distance function, such as
relative distance constraints.  

The main challenge is in finding an appropriate loss function for
learning the matrix $\A$ so that 1) the resulting algorithm is scalable and efficiently computable in kernel space, 2) the resulting metric/kernel yields improved performance on the underlying machine learning problem, such as classification, semi-supervised clustering etc.  We now move on to the details.
\subsection{LogDet Metric Learning}


The LogDet divergence between two positive definite matrices\footnote{The definition of LogDet divergence can be extended to the case when $\A_0$ and $\A$ are rank deficient by appropriate use of the pseudo-inverse. The interested reader may refer to \cite{kulis}.} $\A$, $\A_0 \in \mathbb{R}^{d\times d}$ is defined to be
$$\burg(\A, \A_0)=\tr(\A\A_0^{-1})-\log \det(\A\A_0^{-1})-d.$$
We are interested in finding $\A$ that is closest to $\A_0$ as measured by the LogDet divergence but that
satisfies our desired constraints. When $\A_0=I$, this formulation can be interpreted as a maximum entropy problem. 
Given a set of similarity constraints $S$ and dissimilarity constraints
$D$, we propose the following problem:
\begin{equation}
   \label{eq:itml}
  \begin{split}
    \min_{\A \succeq 0} \quad & \burg(\A, I)\\
    \text{s.t.}\quad & d_\A(\bm{x}_i,\bm{x}_j) \leq u, \qquad (i,j) \in {\mathcal S},\\ 
    & d_\A(\bm{x}_i,\bm{x}_j) \geq \ell, \qquad\ (i,j) \in {\mathcal D}.
  \end{split}
\end{equation}
The above problem was considered in~\cite{itml}. LogDet has many important properties that make it useful for machine learning
and optimization, including scale-invariance and preservation of the range space.  Please see~\cite{kulis_jmlr} for a detailed discussion on the
properties of LogDet.  Beyond 
this, we prefer LogDet over other loss functions (including the squared
Frobenius loss as used in~\cite{shalev} or a linear objective as
in~\cite{wein}) due to the fact that the resulting algorithm turns out to be simple and
efficiently kernelizable.  We note that formulation~\eqref{eq:itml} minimizes the LogDet
divergence to the identity matrix $I$.  This can be generalized to arbitrary
positive definite matrices $\A_0$, however without loss of generality we can consider $\A_0=I$ since $\burg(\A,\A_0)=\burg(\A_0^{-1/2}\A\A_0^{-1/2},I)$. Further, formulation \eqref{eq:itml} considers simple similarity and dissimilarity constraints
over the learned Mahalanobis distance, but other linear constraints are
possible.  Finally, the above
formulation assumes that there exists a feasible solution to the proposed
optimization problem; extensions to the infeasible case involving slack
variables are discussed later (see Section~\ref{sec:itml_alg}). 

\subsection{Kernelizing the Problem}
\label{sec:itml_kernel}
We now consider the problem of kernelizing the metric learning problem. Subsequently, we will present an efficient algorithm and discuss generalization to new points.

Given a set of $n$ constrained data points, let $\K_0$ denote the input kernel
matrix for the data, i.e. $\K_0(i,j) = \kappa(\bm{x}_i,\bm{x}_j) =
\phi(\bm{x}_i)^T\phi(\bm{x}_j)$.  Note that the squared Mahalanobis distance
in kernel space may be written as $\dist(\phi(\x_i), \phi(\x_j))=\K(\x_i,
\x_i)+\K(\x_j,\x_j)-2\K(\x_i,\x_j)$, where $K$ is the learned kernel matrix;
equivalently, we may write the squared distance as
$\Tr(\K(\bm{e}_i-\bm{e}_j)(\bm{e}_i - \bm{e}_j)^T)$, where $\bm{e}_i$ is the
$i$-th canonical basis vector. 
Consider the following problem to find $\K$:
\begin{align}
\label{eq:itml_kernel}
  \min_{\K \succeq 0} &\quad \burg(\K, \K_0) \nonumber\\
  \text{s.t.}& \quad \Tr(\K(\bm{e}_i-\bm{e}_j)(\bm{e}_i - \bm{e}_j)^T)  \leq u \qquad (i,j) \in \S, \\
  & \quad \Tr(\K (\bm{e}_i-\bm{e}_j)(\bm{e}_i - \bm{e}_j)^T)  \geq \ell \qquad (i,j) \in \D. \nonumber
\end{align}
This kernel learning problem was first proposed in the transductive setting
in~\cite{kulis}, though no extensions to the inductive case were considered.
Note that problem \eqref{eq:itml} optimizes over a $d \times d$
matrix $\A$, while the kernel learning problem~\eqref{eq:itml_kernel} optimizes over an $n \times n$ matrix $\K$.  We now present our key theorem connecting problems \eqref{eq:itml} and \eqref{eq:itml_kernel}.
\begin{theorem}
\label{thm:opt_kernel}
Let $\A^*$ be the optimal solution to problem~\eqref{eq:itml} and let $\K^*$
be the optimal solution to problem~\eqref{eq:itml_kernel}.  Then the optimal
solutions are related by the following:
\begin{eqnarray*}
K^*&=&X^T\A^*X,\\
\A^*&=&I + X M X^T,\\ \mbox{ where } M &=& K_0^{-1}(K^*-K_0) K_0^{-1},\quad K_0=X^TX,\quad X=\left[\phi(\x_1), \phi(\x_2), \dots, \phi(\x_n)\right].
\end{eqnarray*}
  \label{thm:thm2}
\end{theorem}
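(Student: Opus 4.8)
The plan is to prove the equivalence of the two programs through Lagrangian stationarity together with a ``value identity'' that equates the two LogDet objectives under the substitution $\K=\X^T\A\X$. First I would write the Lagrangian of~\eqref{eq:itml}. Since $\burg(\A,I)=\Tr(\A)-\log\det(\A)-d$ has gradient $I-\A^{-1}$, and each constraint is linear in $\A$ with gradient $\X(\e_i-\e_j)(\e_i-\e_j)^T\X^T$, stationarity gives $(\A^*)^{-1}=I+\X N\X^T$, where $N=\sum_{(i,j)}\nu_{ij}(\e_i-\e_j)(\e_i-\e_j)^T$ gathers the signed multipliers. Because $\X N\X^T$ annihilates $\mathrm{range}(\X)^\perp$ and carries $\mathrm{range}(\X)$ into itself, $(\A^*)^{-1}$, and hence $\A^*$, is block diagonal with respect to $\mathrm{range}(\X)\oplus\mathrm{range}(\X)^\perp$ and equals the identity on the complement; inverting the nontrivial block (e.g.\ via Sherman--Morrison--Woodbury) yields $\A^*=I+\X M\X^T$ for some symmetric $n\times n$ matrix $M$.

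\emph{Relations and feasibility.} Given this form, set $\K^*:=\X^T\A^*\X=\K_0+\K_0 M\K_0$, so that $M=\K_0^{-1}(\K^*-\K_0)\K_0^{-1}$ once $\K_0=\X^T\X$ is invertible (otherwise I would pass to the pseudo-inverse on $\mathrm{range}(\X)$). The constraints transfer exactly, since $\dist(\x_i,\x_j)=(\e_i-\e_j)^T\X^T\A\X(\e_i-\e_j)=\Tr(\K(\e_i-\e_j)(\e_i-\e_j)^T)$ whenever $\K=\X^T\A\X$, and positive semidefiniteness is preserved both ways: the forward implication is immediate, and conversely $v^T(I+\X M\X^T)v=b^T\K^* b\ge 0$ for $v=\X b\in\mathrm{range}(\X)$, with the action on the complement being the identity.

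\emph{The value identity (the crux).} The step I expect to be the main obstacle is showing the objectives coincide: for $\A=I+\X M\X^T$ and $\K=\X^T\A\X=\K_0+\K_0 M\K_0$, I claim $\burg(\A,I)=\burg(\K,\K_0)$. On the $\A$ side I would use $\Tr(\A)=d+\Tr(M\K_0)$ and Sylvester's identity $\det(I_d+\X M\X^T)=\det(I_n+M\X^T\X)=\det(I_n+M\K_0)$, giving $\burg(\A,I)=\Tr(M\K_0)-\log\det(I_n+M\K_0)$. On the $\K$ side, $\K\K_0^{-1}=I+\K_0 M$ yields $\Tr(\K\K_0^{-1})=n+\Tr(M\K_0)$ and $\det(\K\K_0^{-1})=\det(I_n+M\K_0)$, so $\burg(\K,\K_0)$ reduces to the same expression. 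The delicate point is that the additive constants $d$ and $n$ cancel against the two trace and determinant terms so precisely that the common value depends only on $M$ and $\K_0$; this dimension-free cancellation is exactly what lets a (possibly infinite-dimensional) $\A$ be represented faithfully by the $n\times n$ matrix $\K$.

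\emph{Conclusion.} Finally I would assemble the pieces. By the first step the optimizer $\A^*$ has the special form, so by the last two steps its image $\X^T\A^*\X$ is feasible for~\eqref{eq:itml_kernel} with equal objective; conversely any feasible $\K$ for~\eqref{eq:itml_kernel} pulls back to a feasible $\A=I+\X M\X^T$ for~\eqref{eq:itml} with equal objective. Hence the two programs share the same optimal value under an objective-preserving correspondence of feasible points, so $\K^*=\X^T\A^*\X$; strict convexity of the LogDet divergence makes both optima unique, which pins down the stated relations $\K^*=\X^T\A^*\X$, $\A^*=I+\X M\X^T$, and $M=\K_0^{-1}(\K^*-\K_0)\K_0^{-1}$.
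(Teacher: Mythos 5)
Your proof is correct, but it follows a genuinely different route from the paper's. The paper works at the dual level: it first derives, via Lemma~\ref{lemma:breg_opt}, the Lagrangian dual of the general Bregman-divergence problem, specializes to LogDet to get the dual of \eqref{eq:itml} with the stationarity condition $(\A^*)^{-1}=I+\X\C(\lambda)\X^T$, then uses $\det(I+AB)=\det(I+BA)$ (Lemma~\ref{lemma:lem1}) to rewrite this dual purely in terms of $\lambda$ and observes it is \emph{identical} to the dual of \eqref{eq:itml_kernel}; the relations $\A^*=I+\X M\X^T$, $\K^*=\X^T\A^*\X$, $M=\K_0^{-1}(\K^*-\K_0)\K_0^{-1}$ are then extracted by Sherman--Morrison--Woodbury. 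You instead use KKT stationarity only to establish the representer form $\A^*=I+\X M\X^T$, and then argue entirely in the primal: you exhibit an explicit feasibility- and objective-preserving bijection $\A=I+\X M\X^T \leftrightarrow \K=\K_0+\K_0M\K_0$, whose crux is the value identity $\burg(I+\X M\X^T,I)=\burg(\K_0+\K_0M\K_0,\K_0)=\tr(M\K_0)-\log\det(I_n+M\K_0)$, with the dimension-dependent constants $d$ and $n$ cancelling exactly (Sylvester's identity playing the role of Lemma~\ref{lemma:lem1}); strict convexity then pins down the unique optima. Each approach buys something: the paper's dual derivation produces the dual program \eqref{eq:prob4} as a byproduct (reused later, e.g., for the von Neumann case) and applies uniformly to Bregman divergences, while your primal correspondence makes the objective preservation explicit and dimension-free---which is conceptually closer to, and foreshadows, the general spectral-function kernelization of Section~\ref{sec:loss} (compare the form $\eta I+\X S\X^T$ in Lemma~\ref{lemma:thm1_1})---and it handles uniqueness cleanly. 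Two small points you should make explicit: your stationarity step needs a constraint qualification (the paper invokes Slater's condition, with the PSD multiplier $\Z=0$ since LogDet acts as a barrier), and the formula for $M$ presupposes $\K_0$ invertible, which you correctly flag but which matches the paper's own implicit full-rank assumption.
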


To prove this theorem, we first prove a lemma for general Bregman matrix
divergences, of which the LogDet divergence is a special case. Consider the following general optimization problem:
\begin{align}
  \min_{\A} &\quad D_{\phi}(W, W_0)\nonumber\\
  \text{s.t.}& \quad \tr(WR_i)\leq s_i,\quad \forall 1\leq i\leq m,\nonumber\\
  & \quad \A\succeq 0,
\label{eq:prob_breg}
\end{align}
where $D_{\phi}(W, W_0)$ is a Bregman matrix divergence~\cite{kulis}
generated by a real-valued strictly convex function over symmetric matrices $\phi:\mathbb{R}^{n\times n}\rightarrow \mathbb{R}$, i.e., 
\begin{equation}\label{eq:breg_div}D_{\phi}(W,
  W_0)=\phi(W)-\phi(W_0)-\tr((W-W_0)^T \nabla \phi(W_0)).\end{equation}
Note that the LogDet divergence is generated by $\phi(\A)=-\log\det\A$. 
\begin{lemma}
\label{lemma:breg_opt}
The solution to the dual of the primal formulation~\eqref{eq:prob_breg} is given by:
\begin{align}
  \max_{\A,\lambda,\Z} &\quad \phi(W)-\phi(W_0)-\tr(W \nabla\phi(W))+\tr(W_0\nabla\phi(W_0))-s(\lambda)\nonumber\\
  \text{s.t.}& \quad \nabla \phi(W)= \nabla \phi(W_0)-R(\lambda)+\Z,\\&\quad \lambda\geq 0,
   \quad \Z\succeq 0,
\label{eq:prob_breg_dual}
\end{align}
where $s(\lambda) = \sum_{i=1}^m \lambda_i s_i$ and $R(\lambda) =
\sum_{i=1}^m \lambda_i R_i$.
\end{lemma}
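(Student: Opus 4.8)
The plan is to derive the dual by standard Lagrangian duality, exploiting the fact that the primal \eqref{eq:prob_breg} is a convex program: since $\phi$ is strictly convex, $D_\phi(\cdot, W_0)$ is convex in its first argument, the constraints $\tr(WR_i)\le s_i$ are linear, and $W\succeq 0$ is a convex semidefinite constraint. Assuming the primal is strictly feasible (a Slater condition, consistent with the feasibility assumption already made for \eqref{eq:itml}), strong duality holds and the KKT stationarity condition characterizes the optimum. First I would introduce a nonnegative multiplier $\lambda_i\ge 0$ for each inequality $\tr(WR_i)\le s_i$ and a positive semidefinite matrix multiplier $\Z\succeq 0$ for the constraint $W\succeq 0$, forming the Lagrangian
\[
L(W,\lambda,\Z)=D_\phi(W,W_0)+\sum_{i=1}^m \lambda_i\bigl(\tr(WR_i)-s_i\bigr)-\tr(W\Z).
\]

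The next step is to minimize $L$ over the now-unconstrained variable $W$. Using the definition \eqref{eq:breg_div} of the Bregman divergence, the gradient of $D_\phi(W,W_0)$ in $W$ is $\grad\phi(W)-\grad\phi(W_0)$, while the gradient of the remaining terms is $R(\lambda)-\Z$ with $R(\lambda)=\sum_i \lambda_i R_i$. Setting $\grad_W L=0$ therefore yields exactly the dual feasibility constraint
\[
\grad\phi(W)=\grad\phi(W_0)-R(\lambda)+\Z,
\]
which is the stationarity condition appearing in \eqref{eq:prob_breg_dual}; the sign conventions $\lambda\ge 0$ and $\Z\succeq 0$ follow directly from the inequality and semidefinite constraints.

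Finally I would substitute this stationarity relation back into $L$ to obtain the dual objective. Expanding $D_\phi$ and collecting the terms linear in $W$, the combination $\tr\bigl(W(R(\lambda)-\Z)\bigr)$ can be rewritten using $R(\lambda)-\Z=\grad\phi(W_0)-\grad\phi(W)$; after invoking the symmetry of the matrices involved so that $\tr(W^T\grad\phi(W_0))=\tr(W\grad\phi(W_0))$, the $\tr(W\grad\phi(W_0))$ contributions cancel and the expression collapses to
\[
\phi(W)-\phi(W_0)-\tr(W\grad\phi(W))+\tr(W_0\grad\phi(W_0))-s(\lambda),
\]
which matches \eqref{eq:prob_breg_dual}. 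I expect the only delicate points to be the matrix-calculus bookkeeping (in particular, treating $W$ and $\grad\phi$ as symmetric, since $\phi$ is defined over symmetric matrices and each $R_i$ is symmetric, so that $\grad_W\tr(WR_i)=R_i$) and the verification that strong duality applies, ensuring the dual optimum equals the primal optimum; the algebraic substitution itself is routine once the stationarity condition is in hand.
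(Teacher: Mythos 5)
Your proposal is correct and takes essentially the same route as the paper's proof: form the Lagrangian with multipliers $\lambda \geq 0$ and $Z \succeq 0$, use $\nabla_W D_{\phi}(W,W_0)=\nabla\phi(W)-\nabla\phi(W_0)$ to obtain the stationarity constraint $\nabla\phi(W)=\nabla\phi(W_0)-R(\lambda)+Z$, and substitute it back so the terms in $\tr(W\,\nabla\phi(W_0))$ cancel, yielding the stated dual objective. Your added remarks on strong duality and symmetry bookkeeping are sound but not required for the lemma itself, which only asserts the form of the dual (the paper invokes Slater's condition separately, later, in the proof of Theorem~\ref{thm:opt_kernel}).
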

\begin{proof}
First, consider the Lagrangian of \eqref{eq:prob_breg}:
\begin{align}
  &L(W, \lambda, \Z)=D_{\phi}(W, W_0)+\tr(W R(\lambda))-s(\lambda)-\tr(W\Z),\nonumber\\
  \text{where }&R(\lambda)=\sum_{i=1}^m \lambda_i R_i,\quad 
  s(\lambda)=\sum_{i=1}^m \lambda_i s_i,\quad
  Z \succeq 0, \lambda \geq 0.
  \label{eq:lag_prob_breg}
\end{align}
Now, note that
\begin{equation}
  \label{eq:breg_diff}
\nabla_W D_{\phi}(W, W_0) = \nabla \phi(W)-\nabla \phi(W_0).
\end{equation}
Setting the gradient of the Lagrangian with respect to $W$ to be zero and using~\eqref{eq:breg_diff}, we get:
\begin{align}
  &\nabla \phi(W)-\nabla \phi(W_0)+R(\lambda)-\Z=0,\\
  \text{and so, }&\tr(W \nabla \phi(W_0))=\tr(W \nabla \phi(W))+\tr(W R(\lambda))-\tr(W\Z)\label{eq:berg_A_tr}.
\end{align}
Now, substituting \eqref{eq:berg_A_tr} into the Lagrangian, we get: 
$$L(W, \lambda, \Z)=\phi(W)-\phi(W_0)-\tr(W \nabla \phi(W))+\tr(W_0 \nabla
\phi(W_0)) - s(\lambda),$$
where $\nabla \phi(W)=\nabla \phi(W_0)-R(\lambda)+\Z$. The lemma now follows directly.
\end{proof}
To prove Theorem~\ref{thm:opt_kernel}, we will also need the following well-known lemma:
\begin{lemma}
 $\det(I+AB)=\det(I+BA)$ for all $A\in \mathbb{R}^{m\times n}$, $B\in \mathbb{R}^{n\times m}$.
  \label{lemma:lem1}
\end{lemma}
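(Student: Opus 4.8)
The plan is to prove this by a single block-matrix factorization, which yields the identity in one stroke and avoids any invertibility or rank hypotheses on $A$ and $B$. Throughout I write $I_m$ and $I_n$ for the identity matrices of the indicated sizes, since the two occurrences of ``$I$'' in the statement refer to an $m\times m$ and an $n\times n$ matrix respectively.

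First I would introduce the $(m+n)\times(m+n)$ block matrix
$$M=\begin{pmatrix} I_m & -A \\ B & I_n \end{pmatrix}.$$
The key observation is that $M$ admits two block LDU factorizations, obtained by eliminating either the off-diagonal block $B$ against the top-left pivot $I_m$, or the off-diagonal block $-A$ against the bottom-right pivot $I_n$. Because both pivots are identity matrices, they are trivially invertible and no case analysis is needed. Eliminating $B$ writes $M$ as a unit lower-triangular factor times the block-diagonal factor $\mathrm{diag}(I_m,\ I_n+BA)$ times a unit upper-triangular factor; eliminating $-A$ instead produces the block-diagonal factor $\mathrm{diag}(I_m+AB,\ I_n)$. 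Each of these algebraic identities is checked by a one-line block multiplication.

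Taking determinants, the unit-triangular factors contribute $1$ (a triangular matrix with identity diagonal blocks has determinant $1$), and the determinant is multiplicative, so only the block-diagonal factor survives in each case. Reading off the two factorizations gives $\det M=\det(I_m)\,\det(I_n+BA)=\det(I_n+BA)$ from the first and $\det M=\det(I_m+AB)\,\det(I_n)=\det(I_m+AB)$ from the second. Equating the two expressions for $\det M$ yields $\det(I_m+AB)=\det(I_n+BA)$, which is the claim.

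Since every step reduces to explicit block factorizations together with multiplicativity of the determinant and the value of a block-triangular determinant, there is no substantive obstacle. The only point needing a little care is bookkeeping the sizes: $AB$ is $m\times m$ while $BA$ is $n\times n$, so the two ``$I$'' matrices are genuinely different, and one must apply the elimination on the correct side in each factorization. An alternative route is to observe that $AB$ and $BA$ share the same nonzero eigenvalues, whence $\det(I+AB)=\prod_i(1+\lambda_i)=\det(I+BA)$ since the extra eigenvalues are zero; but there the mismatch in the number of zero eigenvalues between the two products is precisely the subtlety that the block argument sidesteps cleanly, so I prefer the factorization proof.
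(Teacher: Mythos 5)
Your proof is correct. Note that the paper itself offers no proof of this lemma at all---it simply cites it as ``well-known'' and moves on---so there is no argument to compare against; your write-up fills that gap. The block-factorization you give is the standard Schur-complement proof, and both factorizations check out: with
$M=\left(\begin{smallmatrix} I_m & -A \\ B & I_n \end{smallmatrix}\right)$
one has
$M=\left(\begin{smallmatrix} I_m & 0 \\ B & I_n \end{smallmatrix}\right)\left(\begin{smallmatrix} I_m & 0 \\ 0 & I_n+BA \end{smallmatrix}\right)\left(\begin{smallmatrix} I_m & -A \\ 0 & I_n \end{smallmatrix}\right)$
and
$M=\left(\begin{smallmatrix} I_m & -A \\ 0 & I_n \end{smallmatrix}\right)\left(\begin{smallmatrix} I_m+AB & 0 \\ 0 & I_n \end{smallmatrix}\right)\left(\begin{smallmatrix} I_m & 0 \\ B & I_n \end{smallmatrix}\right)$,
both verified by direct block multiplication, giving $\det(I_m+AB)=\det M=\det(I_n+BA)$ with no rank or invertibility hypotheses. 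Your closing remark is also apt: the eigenvalue route ($AB$ and $BA$ sharing nonzero spectrum) works but requires care with algebraic multiplicities and the differing counts of zero eigenvalues, which the factorization argument avoids entirely; it is the right proof to prefer here, especially since the paper applies the lemma with $X\in\mathbb{R}^{d\times n}$ and $C(\lambda)X^T$ of unequal dimensions, exactly the rectangular, possibly rank-deficient setting your argument handles without any extra conditions.
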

We are now ready to prove Theorem~\ref{thm:opt_kernel}. 
\begin{proof}{\bf of Theorem~\ref{thm:opt_kernel}.}
First we observe that the squared Mahalanobis distances from the constraints
in~\eqref{eq:itml} may be written as 
\begin{eqnarray*}
  d_\A(\bm{x}_i, \bm{x}_j) & = & \tr(\A(\bm{x}_i - \bm{x}_j)(\bm{x}_i -
\bm{x}_j)^T)\\
 & = & \tr(\A\X(\bm{e}_i - \bm{e}_j)(\bm{e}_i - \bm{e}_j)^T X^T).
\end{eqnarray*}

The objective in problem~\eqref{eq:itml}, $\burg(\A,I)$, is defined only for positive definite $\A$ and is a convex function of $\A$, hence using
Slater's optimality condition, $\Z=0$ (in Lemma~\ref{lemma:breg_opt}) and may be removed from the constraints. Further, note that the LogDet divergence $\burg(\cdot,\cdot)$ is a Bregman matrix divergence with generating function $\phi(\A)=-\log\det(\A)$.  Thus using $\nabla \phi(\A)=-W^{-1}$ and Lemma~\ref{lemma:breg_opt}, the dual of problem \eqref{eq:itml} is given by:
\begin{align}
  \min_{\A,\lambda} &\quad \log\det\A+b(\lambda)\nonumber\\
  \text{s.t.}& \quad \A^{-1}=I+\X\C(\lambda)\X^T,\label{eq:ldA}\\&\quad \lambda\geq 0,\nonumber
\end{align}
where $\C(\lambda)=\sum_{(i,j)\in {\mathcal S}}\lambda_{ij}(\e_i-\e_j)(\e_i-\e_j)^T-\sum_{(i,j)\in {\mathcal D}}\lambda_{ij}(\e_i-\e_j)(\e_i-\e_j)^T$ and $\b(\lambda)=\sum_{(i,j)\in {\mathcal S}}\lambda_{ij}u-\sum_{(i,j)\in {\mathcal D}}\lambda_{ij}\ell$. 

Now, for matrices $\A$ feasible for problem \eqref{eq:ldA}, $ \log\det\A=- \log\det\A^{-1}=-\log\det(I+\X\C(\lambda)\X^T)=-\log\det(I+\C(\lambda)\K_0)$, where the last equality follows from Lemma~\ref{lemma:lem1} (recall that $K_0=X^TX$). Since, $\log\det(AB)=\log\det A+\log \det B$ for square matrices $A$ and $B$, \eqref{eq:ldA} may be rewritten as
\begin{align}
  \min_{\lambda} &\quad -\log\det(\K_0^{-1}+\C(\lambda))+b(\lambda),\nonumber\\
  \text{s.t.}& \quad \lambda\geq 0.
\label{eq:prob4}
\end{align}

Writing $\K^{-1}=\K_0^{-1}+\C(\lambda)$, the above can be written as:
\begin{align}
  \min_{K,\lambda} &\quad \log\det\K+b(\lambda),\nonumber\\
  \text{s.t.}& \quad \K^{-1}=\K_0^{-1}+\C(\lambda), \lambda\geq 0.
\label{eq:prob5}
\end{align}
The above problem can be seen by inspection to be identical to the dual problem of \eqref{eq:itml_kernel} as given by Lemma~\ref{lemma:breg_opt}. Hence, since their dual problems are identical, problems ~\eqref{eq:itml} and~\eqref{eq:itml_kernel} are equivalent.  Using~\eqref{eq:ldA} and the Sherman-Morrison-Woodbury formula, the form of the optimal $\A^*$ is:
$$\A^*=I-\X(\C(\lambda^*)^{-1}+\K_0)^{-1}\X^T = I + \X M \X^T,$$
where $\lambda^*$ is the dual optimal and $M=-(\C(\lambda^*)^{-1}+\K_0)^{-1}$. Similarly, using \eqref{eq:prob5}, the optimal $\K^*$ is given by:
$$\K^*=\K_0-\K_0(\C(\lambda^*)^{-1}+\K_0)^{-1}\K_0=\X^T\A^*\X$$
We can explicitly solve for $M$ as $M  = K_0^{-1}(K^*-K_0)K_0^{-1}$ by
simplification of these expressions using the fact that $K_0 = X^T X$.
This proves the theorem. 
\end{proof}
We now generalize the above theorem to regularize against arbitrary positive definite matrices $\A_0$. 
\begin{corollary}
Consider the following problem:
\begin{equation}
   \label{eq:itml_gen}
  \begin{split}
    \min_{\A \succeq 0} \quad & \burg(\A, \A_0)\\
    \text{s.t.}\quad & d_{\A}(\bm{x}_i,\bm{x}_j) \leq u \qquad (i,j) \in \S,\\ 
    & d_{\A}(\bm{x}_i,\bm{x}_j) \geq \ell \qquad\ (i,j) \in \D.
  \end{split}
\end{equation}
Let $\A^*$ be the optimal solution to problem~\eqref{eq:itml_gen} and let $\K^*$ be the optimal solution to problem~\eqref{eq:itml_kernel}. Then the optimal solutions are related by the following:
\begin{eqnarray*}
K^*&=&X^T\A^*X\\
\A^*&=&\A_0 + \A_0X M X^T\A_0,\\ \mbox{ where } M &=& K_0^{-1}(K^* - K_0) K_0^{-1},\quad K_0=X^T\A_0X,\quad X=\left[\phi(\x_1), \phi(\x_2), \dots, \phi(\x_n)\right]
\end{eqnarray*}
\end{corollary}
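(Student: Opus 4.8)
The plan is to reduce the corollary to Theorem~\ref{thm:opt_kernel} by a change of variables that exploits the scale-invariance of the LogDet divergence, rather than to redo the Lagrangian/duality analysis from scratch. First I would introduce the whitened variable $\AA = \A_0^{-1/2}\A\A_0^{-1/2}$ together with the transformed data $\XX = \A_0^{1/2}\X$, so that $\A = \A_0^{1/2}\AA\A_0^{1/2}$ and $\XX = [\A_0^{1/2}\phi(\x_1),\dots,\A_0^{1/2}\phi(\x_n)]$. Since $\A_0\succ 0$, the square root $\A_0^{1/2}$ exists, is symmetric, and is invertible, so the map $\A\mapsto\AA$ is a bijection preserving positive definiteness. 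By the identity $\burg(\A,\A_0)=\burg(\A_0^{-1/2}\A\A_0^{-1/2},I)$ noted after~\eqref{eq:itml}, the objective of~\eqref{eq:itml_gen} equals $\burg(\AA,I)$.

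Next I would check that the constraints transform correctly. Substituting $\A=\A_0^{1/2}\AA\A_0^{1/2}$ into $d_{\A}(\x_i,\x_j)=(\phi(\x_i)-\phi(\x_j))^T\A(\phi(\x_i)-\phi(\x_j))$ yields $(\A_0^{1/2}(\phi(\x_i)-\phi(\x_j)))^T\AA(\A_0^{1/2}(\phi(\x_i)-\phi(\x_j)))$, i.e. exactly the squared Mahalanobis distance under $\AA$ between the transformed points $\A_0^{1/2}\phi(\x_i)$ and $\A_0^{1/2}\phi(\x_j)$. Hence problem~\eqref{eq:itml_gen} is identical to problem~\eqref{eq:itml} written in the variable $\AA$ with data matrix $\XX$. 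Applying Theorem~\ref{thm:opt_kernel} to this transformed instance, its input kernel is $\tilde{\K}_0=\XX^T\XX=\X^T\A_0\X$, which is precisely the matrix $\K_0$ appearing in the corollary; moreover, since the kernel-side constraints $\Tr(\K(\e_i-\e_j)(\e_i-\e_j)^T)$ depend only on $\K$, the kernel problem for the transformed instance coincides with~\eqref{eq:itml_kernel}, so its optimum is the same $\K^*$ and satisfies $\K^*=\XX^T\AA^*\XX$.

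Finally I would translate the theorem's conclusions back to the original variables. From $\K^*=\XX^T\AA^*\XX=\X^T\A_0^{1/2}\AA^*\A_0^{1/2}\X=\X^T\A^*\X$ I recover the first identity. From $\AA^*=I+\XX\tilde{M}\XX^T$ with $\tilde{M}=\tilde{\K}_0^{-1}(\K^*-\tilde{\K}_0)\tilde{\K}_0^{-1}$, conjugating by $\A_0^{1/2}$ and using $\A_0^{1/2}\XX=\A_0\X$ gives $\A^*=\A_0+\A_0\X\tilde{M}\X^T\A_0$; and since $\tilde{\K}_0=\K_0$ we get $\tilde{M}=\K_0^{-1}(\K^*-\K_0)\K_0^{-1}=M$, which yields the remaining two identities.

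The only real obstacle is the bookkeeping of the change of variables, not any genuinely new estimate: I must confirm that $\A_0^{1/2}$ induces a feasibility-preserving bijection between the two problems, that the distance constraints map exactly as above, and that the transformed input kernel is $\X^T\A_0\X$. Once these are verified, Theorem~\ref{thm:opt_kernel} supplies the structure of the optima and the remaining manipulations are routine matrix algebra.
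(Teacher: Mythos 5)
Your proposal is correct and follows essentially the same route as the paper's own proof: whitening via $\AA=\A_0^{-1/2}\A\A_0^{-1/2}$ and $\XX=\A_0^{1/2}\X$, invoking Theorem~\ref{thm:opt_kernel} on the transformed instance with input kernel $\XX^T\XX=\X^T\A_0\X$, and conjugating back by $\A_0^{1/2}$ to recover $K^*=\X^T\A^*\X$ and $\A^*=\A_0+\A_0\X M\X^T\A_0$. Your version is slightly more careful than the paper's (explicitly checking that the distance constraints transform exactly and that the kernel-side problem is unchanged), but there is no substantive difference.
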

\begin{proof}
Note that $\burg(\A, \A_0)=\burg(\A_0^{-1/2}\A\A_0^{-1/2}, I)$. Let $\AA=\A_0^{-1/2}\A\A_0^{-1/2}$. Problem~\eqref{eq:itml_gen} is now equivalent to:
 \begin{equation}
    \label{eq:itml_gen1}
   \begin{split}
     \min_{\AA \succeq 0} \quad & \burg(\AA, I)\\
     \text{s.t.}\quad & d_{\AA}(\xx_i,\xx_j) \leq u \qquad (i,j) \in \S,\\ 
     & d_{\AA}(\xx_i,\xx_j) \geq \ell \qquad\ (i,j) \in \D,
   \end{split}
 \end{equation}
where $\AA=\A_0^{-1/2}\A\A_0^{-1/2}$, $\XX=\A_0^{1/2}X$ and $\XX=[\xx_1, \xx_2, \dots, \xx_n]$. Now using Theorem~\ref{thm:opt_kernel}, the optimal solution $\AA^*$ of problem~\eqref{eq:itml_gen1} is related to the optimal $K^*$ of problem~\eqref{eq:itml_kernel} by $K^*=\XX^T\AA^*\XX=X^T\A_0^{1/2}\A_0^{-1/2}\A^*\A_0^{-1/2}\A_0^{1/2}X=X^T\A^* X$. Similarly, $\A^*=\A_0^{1/2}\AA^*\A_0^{1/2}=\A_0+\A_0XMX^T\A_0$ where $M=K_0^{-1}(K^* - K_0) K_0^{-1}$.
\end{proof}
Since the kernelized version of LogDet metric learning can
be posed as a linearly constrained optimization problem with a LogDet
objective, similar algorithms can be used to solve either problem.  
This equivalence implies that we can \textit{implicitly} solve the metric
learning problem by instead solving for the optimal kernel matrix $K^*$.  
Note that using LogDet divergence as objective function has two significant benefits over many other popular loss functions: 1) the metric and kernel learning problems \eqref{eq:itml}, \eqref{eq:itml_kernel} are both equivalent and hence solving the kernel learning formulation directly provides an out of sample extension (see Section~\ref{sec:generalize_points} for details), 2) projection with respect to the LogDet divergence onto a single distance constraint has a closed form solution, thus making it amenable to an efficient cyclic projection algorithm (refer to Section~\ref{sec:itml_alg}). 

\subsection{Generalizing to New Points}
\label{sec:generalize_points}
In this section, we see how to generalize to new points using the learned kernel matrix $K^*$.

Suppose that we have solved the kernel learning problem for $K^*$ (from now
on, we will drop the $^*$ superscript and assume that $K$ and $\A$ are at optimality).  The
distance between two points $\phi(\bm{x}_i)$ and 
$\phi(\bm{x}_j)$ that are in the training set can be computed directly from
the learned kernel matrix as $K(i, i) + K(j, j) - 2K(i, j)$.  We now consider
the problem of computing the learned distance between two points
$\phi(\bm{z}_1)$ and $\phi(\bm{z}_2)$ that may not be in the training set.

In Theorem~\ref{thm:opt_kernel}, we showed that the optimal solution to the
metric learning problem can be 
expressed as $\A = I + \X M \X^T$. To compute the Mahalanobis distance in
kernel space, we see that the inner product $\phi(\bm{z}_1)^T\A
\phi(\bm{z}_2)$ can be computed entirely via inner products between points:
\begin{eqnarray}
\phi(\bm{z}_1)^T\A \phi(\bm{z}_2) &=& \phi(\bm{z}_1)^T(I+\X M \X^T)\phi(\bm{z}_2) \nonumber\\
&=& \phi(\bm{z}_1)^T\phi(\bm{z}_2) +  \phi(\bm{z}_1)^T\X M \X^T\phi(\bm{z}_2) \nonumber \\
&=& \kappa(\bm{z}_1,\bm{z}_2) + \bm{k}_1^T M \bm{k}_2, \mbox{where } \bm{k}_i
= [\kappa(\bm{z}_i,\bm{x}_1), ..., \kappa(\bm{z}_i, \bm{x}_n)]^T. \label{eq:newpoints}
\end{eqnarray}
Thus, the
expression above can be used to evaluate kernelized distances with
respect to the learned kernel function between arbitrary data objects.


In summary, the connection between kernel learning and metric learning allows us to generalize our metrics to new points in kernel space.
This is performed by first solving the kernel learning problem for $K$, then
using the learned kernel matrix and the input kernel function to compute
learned distances via~\eqref{eq:newpoints}. 
 
\subsection{Kernel Learning Algorithm}
\label{sec:itml_alg}
Given the connection between the Mahalanobis metric learning problem for the
$d \times d$ matrix $\A$ and the kernel learning problem for the $n \times n$
kernel matrix $K$, we would like to develop an algorithm for efficiently
performing metric learning in kernel space.
Specifically, we provide an algorithm (see Algorithm~\ref{algo:burg}) for solving the kernelized LogDet
metric learning problem, as given in \eqref{eq:itml_kernel}.  

First, to avoid problems with infeasibility, we incorporate
\textit{slack variables} into our formulation.  These provide a tradeoff
between minimizing the divergence between $\K$ and $\K_0$ and satisfying the
constraints.  Note that our earlier results (see Theorem~\ref{thm:opt_kernel}) easily generalize to the slack case:
\begin{equation}
   \label{eqn:burgslackobj}
  \begin{split}
    \min_{\K, \bm{\xi}} \quad & \burg(\K, \K_0) + \gamma \cdot \burg(\mbox{diag}(\bm{\xi}),\mbox{diag}(\bm{\xi}_0))\\
    \text{s.t.}\quad & \Tr(\K (\bm{e}_i - \bm{e}_j)(\bm{e}_i - \bm{e}_j)^T) \leq \xi_{ij} \quad (i,j)
    \in \S,\\ 
    & \Tr(\K (\bm{e}_i - \bm{e}_j)(\bm{e}_i - \bm{e}_j)^T) \geq \xi_{ij}
    \quad (i,j) \in \D.
  \end{split}
\end{equation}
The parameter $\gamma$ above controls the tradeoff between satisfying the
constraints and minimizing $\burg(\K,\K_0)$, and the entries of $\bm{\xi}_0$
are set to be $u$ for corresponding similarity constraints and $\ell$ for
dissimilarity constraints.

\renewcommand{\algorithmicrequire}{\textbf{Input:}}
\renewcommand{\algorithmicensure}{\textbf{Output:}}
\renewcommand{\algorithmicrepeat}{\hspace*{-1em} 3.~\textbf{repeat}}
\renewcommand{\algorithmicuntil}{{\hskip -1em}\textbf{until}}
\renewcommand{\algorithmicreturn}{{\hskip -1em}\textbf{return}}

\begin{algorithm}[t]\small
  \centering
  \caption{Metric/Kernel Learning with the LogDet Divergence\label{algo:burg}}
  \begin{algorithmic}
    \begin{minipage}{.9\textwidth}
      \REQUIRE $\K_0$: input $n \times n$ kernel matrix, $\S$: set of similar
      pairs, $\D$: set of dissimilar pairs, $u, \ell$: distance thresholds,
      $\gamma$: slack parameter
      \ENSURE  {$\K$: output kernel matrix}
      \vspace*{5pt}
      \STATE \hspace*{-1em} 1.~$\K \gets \K_0$, $\lambda_{ij} \gets 0~ \forall~ij$
      \STATE \hspace*{-1em} 2.~$\xi_{ij} \gets u$ for $(i,j) \in \S$; otherwise $\xi_{ij} \gets \ell$
      \STATE \hspace*{-1em} 3.~\textbf{repeat}
      \vspace*{-6pt}
      \renewcommand{\labelenumi}{3.\theenumi.}
      \begin{enumerate}
        \setlength{\itemsep}{-2pt}
      \item Pick a constraint $(i,j) \in \S$ or $\D$
      \item $p \gets (\bm{e}_{i} - \bm{e}_{j})^T \K (\bm{e}_{i} - \bm{e}_{j})$
      \item $\delta \gets 1$ if $(i,j) \in \S$, $-1$ otherwise
      \item $\alpha \gets \min \Big (\lambda_{ij},\frac{\delta \gamma}{\gamma+1} \Big ( \frac{1}{p} - \frac{1}{\xi_{ij}} \Big ) \Big )$
      \item $\beta \gets \delta \alpha / (1 - \delta \alpha p)$
      \item $\xi_{ij} \gets \gamma \xi_{ij} / (\gamma + \delta \alpha \xi_{ij})$
      \item $\lambda_{ij} \gets \lambda_{ij} - \alpha$
      \item $\K \leftarrow \K + \beta \K (\bm{e}_{i} - \bm{e}_{j})(\bm{e}_{i} - \bm{e}_{j})^T \K$
      \end{enumerate}
      \vspace*{-5pt}
      \STATE \hspace*{-1em} 4.~\textbf{until} {convergence}
      \RETURN $\K$
    \end{minipage}
\end{algorithmic}
\label{alg:itml}
\end{algorithm}
To solve problem \eqref{eqn:burgslackobj}, we employ the technique of \textit{Bregman 
projections}, as discussed in the transductive setting~\cite{kulis,kulis_jmlr}.  At each iteration, we choose a constraint $(i,j)$ from $\S$ or
$\D$.  We then apply a Bregman projection such that $\K$ satisfies
the constraint after projection; note that the projection is not an orthogonal
projection but is rather tailored to the particular function that we are
optimizing.
Algorithm~\ref{algo:burg} details the steps for Bregman's method on this
optimization problem.  Each update is given by a rank-one update
\begin{displaymath}
\K \leftarrow \K + \beta \K (\bm{e}_{i} - \bm{e}_{j})(\bm{e}_{i} -
\bm{e}_{j})^T \K,
\end{displaymath}
where $\beta$ is an appropriate projection parameter that can be computed in closed form (see Algorithm~\ref{algo:burg}).

Algorithm~\ref{algo:burg} has a number of key properties which make it
useful for various kernel learning tasks.  First, the Bregman projections
can be computed in closed form, assuring that the projection updates are
efficient ($O(n^2)$).  Note that, if the feature space dimensionality $d$ is less than $n$ then a similar algorithm can be used directly in the feature space (see \cite{itml}). Instead of LogDet, if we use the von Neumann divergence, another potential
loss function for this problem, $O(n^2)$
updates are possible, but are much more complicated and require use of the
fast multipole method, which cannot be employed easily in
practice.  Secondly, the projections maintain positive definiteness, which
avoids any eigenvector computation or semidefinite programming.  This is in stark contrast with the Frobenius loss, which requires additional computation to
maintain positive definiteness, leading to $O(n^3)$ updates.
\subsection{Metric/Kernel Learning with Large Datasets}
\label{sec:itml_large}
In Sections~\ref{sec:itml_prob} and \ref{sec:itml_kernel} we proposed a LogDet divergence based Mahalanobis metric learning problem~\eqref{eq:itml} and an equivalent kernel learning problem~\eqref{eq:itml_kernel}. The number of parameters involved in these problems is $O(\min(n^2,d^2))$, where $n$ is the number of training points and $d$ is the dimensionality of the data. This quadratic dependency effects not only the running time for both training and testing, but also poses tremendous challenges in
estimating a quadratic number of parameters.  For example, a data set with
10,000 dimensions leads to a Mahalanobis matrix with 100
million values.  This represents a fundamental limitation of existing
approaches, as many modern data mining problems possess relatively high
dimensionality.   

In this section, we present a method for learning structured Mahalanobis
distance (kernel) functions that scale linearly with the dimensionality (or training set size).  Instead of
representing the Mahalanobis distance/kernel matrix as a full $d \times d$ (or $n\times n$) matrix with $O(\min(n^2,d^2))$ parameters, our methods use compressed representations, admitting
matrices parameterized by $O(\min(n,d))$ values.  This enables the Mahalanobis
distance/kernel function to be learned, stored, and evaluated efficiently in the context of high dimensionality and large training set size. In particular, we propose a method to efficiently learn an identity plus low-rank Mahalanobis distance matrix and its equivalent kernel function. 

Now, we formulate the high-dimensional identity plus low-rank (\dlr) metric learning problem. Consider a low-dimensional subspace in $\mathbb{R}^d$ and let the columns of $U$ form an orthogonal basis of this subspace. 
 We will constrain the learned Mahalanobis distance matrix to be of the form:
\begin{equation}
  \label{eq:iplr_mahal}
  \A=I^d+\A_l=I^d+ULU^T,
\end{equation}
where $I^d$ is the $d\times d$ identity matrix, $\A_l$ denotes the low-rank
part of $\A$ and $L\in\mathbb{S}_+^{k\times k}$ with $k\ll \min(n,d)$.  Analogous to~\eqref{eq:itml}, we propose the following problem to learn an identity plus low-rank Mahalanobis distance function:
\begin{equation}
   \label{eq:iplr}
  \begin{split}
    \min_{\A,L \succeq 0} \quad & \burg(\A, I^d)\\
    \text{s.t.}\quad & d_\A(\bm{x}_i,\bm{x}_j) \leq u \qquad (i,j) \in {\mathcal S},\\ 
    & d_\A(\bm{x}_i,\bm{x}_j) \geq \ell \qquad\ (i,j) \in {\mathcal D},\\
    & \A=I^d+ULU^T.
  \end{split}
\end{equation}
Note that the above problem is identical to \eqref{eq:itml} except for the added constraint $\A=I^d+ULU^T$. 

Let $F=I^k+L$. Now we have
\begin{align}
\burg(\A,I^d)&=\tr(I^d+ULU^T)-\log\det(I^d+ULU^T)-d,\nonumber\\
&=\tr(I^k+L)+d-k-\log\det(I^k+L)-d,\nonumber\\
&=\burg(F,I^k),
\label{eq:newburg}
\end{align}
where the second equality follows from the fact that $\tr(AB)=\tr(BA)$ and Lemma~\ref{lemma:lem1}. 
Also note that for all $\C\in\mathbb{R}^{n\times n}$,
\begin{align}
\tr(\A\X C\X^T)&=\tr((I^d+ULU^T)\X C\X^T),\nonumber\\
&=\tr(\X C\X^T)+\tr(LU^T\X C\X^TU),\nonumber\\
&=\tr(\X C\X^T)-\tr(\X'C{\X'}^T)+\tr(F\X'C{\X'}^T),\nonumber
\end{align}
where $\X'=U^T\X$ is the reduced-dimensional representation of $X$. Hence, 
\begin{equation}
  \label{eq:newdis}
  d_\A(\x_i,\x_j)=\tr(\A\X(\e_i-\e_j)(\e_i-\e_j)^T\X^T)=d_I(\x_i,\x_j)-d_I(\x'_i,\x'_j)+d_F(\x_i',\x_j').
\end{equation}
Using \eqref{eq:newburg} and \eqref{eq:newdis}, problem \eqref{eq:iplr} is equivalent to the following:
\begin{equation}
   \label{eq:iplr1}
  \begin{split}
    \min_{F \succeq 0} \quad & \burg(F, I^k)\\
    \text{s.t.}\quad & d_F(\bm{x'}_i,\bm{x'}_j) \leq u-d_I(\x_i,\x_j)+d_I(\x'_i,\x'_j) \qquad (i,j) \in {\mathcal S},\\ 
    & d_F(\bm{x'}_i,\bm{x'}_j) \geq\ell-d_I(\x_i,\x_j)+d_I(\x'_i,\x'_j) \qquad\ (i,j) \in {\mathcal D}.
  \end{split}
\end{equation}
Note that the above formulation is an instance of problem~\eqref{eq:itml} and can be solved using an algorithm similar to Algorithm~\ref{alg:itml}. Furthermore, the above problem solves for a $k\times k$ matrix rather than a $d\times d$ matrix seemingly required by \eqref{eq:iplr}. The optimal $\A^*$ is obtained as $\A^*=I^d+U(F^*-I^k)U^T$.

Next, we show that problem \eqref{eq:iplr1} and equivalently \eqref{eq:iplr} can be solved efficiently in feature space by selecting an appropriate basis $R$ ($U=R(R^TR)^{-1/2}$). Let $R=XJ$, where $J\in\mathbb{R}^{n\times k}$. Note that $U=XJ(J^TK_0J)^{-1/2}$ and $X'=U^TX=(J^TK_0J)^{-1/2}J^TK_0$, i.e., $X'\in \mathbb{R}^{k\times n}$ can be computed efficiently in the feature space (requiring inversion of only a $k\times k$ matrix). Hence, problem \eqref{eq:iplr1} can be solved efficiently in feature space using Algorithm~\ref{alg:itml} and the optimal kernel $K^*$ is given by $$K^*=X^T\A^*X=K_0+K_0J(J^TK_0J)^{-1/2}(F^*-I^k)(J^TK_0J)^{-1/2}J^TK_0.$$ 

Note that problem \eqref{eq:iplr1} can be solved via Algorithm~\ref{alg:itml} using $O(k^2)$ computational steps per iteration. Additionally, $O(\min(n,d)k)$ steps are required to prepare the data. Also, the optimal solution $\A^*$ (or $K^*$) can be stored implicitly in $O(\min(n,d)k)$ steps and similarly, the Mahalanobis distance between any two points can be computed in time $O(\min(n,d)k)$ steps. 

The metric learning problem presented here depends critically on the basis selected. For the case when $d$ is not significantly larger than $n$ and feature space vectors $X$ are available explicitly, the basis $R$ can be selected by using one of the following heuristics (see Section 5, \cite{jasonkdd} for more  details): 
\begin{itemize}
\item Using the top $k$ singular vectors of $X$.
\item Clustering the columns of $X$ and using the mean vectors as the basis $R$.
\item For the fully-supervised case, if the number of classes ($c$) is
  greater than the required dimensionality ($k$) then cluster the class-mean
  vectors into $k$ clusters and use the obtained cluster centers for forming
  the basis $R$. If $c<k$ then cluster each class into $k/c$ clusters and use
  the cluster centers to form $R$. 
\end{itemize}
For learning the kernel function, the basis $R=XJ$ can be selected by: 1)
using a randomly sampled coefficient matrix $J$, 2) clustering $X$ using
kernel $k$-means or a spectral clustering method, 3) choosing a random subset
of $X$, i.e, the columns of $J$ are random indicator vectors. A more careful selection of the basis $R$ should further improve accuracy of our method and is left as a topic for future research. 


\renewcommand{\S}{S}
\section{Kernelization with Other Convex Loss Functions}\label{sec:loss}

One of the key benefits to using the LogDet divergence for metric learning is
 its ability to efficiently learn a linear mapping for high-dimensional kernelized data.  A natural
question is whether one can kernelize metric learning with other loss
functions, such as those considered previously in the literature.  To this
end, the work
of~\cite{arxiv} showed how to kernelize some popular metric learning
algorithms such as MCML~\cite{roweis} and LMNN~\cite{wein}.  In this section,
we show a complementary result that shows how to kernelize a class
of metric learning algorithms that learns a linear map in input or feature space.

Consider the following (more) general optimization problem that may be viewed as a generalization of \eqref{eq:itml} for learning a linear transformation matrix $G$, where $\A=G^TG$:
\begin{align}
  \min_{\A} &\quad \tr(f(\A))\nonumber\\
  \text{s.t.}& \quad \tr(\A\X\C_i\X^T)\leq \b_i,\quad \forall 1\leq i\leq m\nonumber\\
  & \quad \A\succeq 0,
\label{eq:prob1}
\end{align}
where $f:\mathbb{R}^{d\times d}\rightarrow \mathbb{R}^{d\times d}$, $\tr(f(\A))$ is a convex function, $\A\in S_{+}^{d\times d}$, $\X\in \mathbb{R}^{d\times n}$, and each $\C_i\in
\mathbb{R}^{n\times n}$ is a symmetric matrix.  Note that we have generalized
both the loss function and the constraints.  For example, the LogDet
divergence can be viewed as a special case, since we may write $D_{\ell
  d}(X,Y) = \tr(XY^{-1} - \log (XY^{-1}) - I)$. The loss function $f(\A)$ regularizes the learned transformation $\A$ against the baseline Euclidean distance metric, i.e., $\A_0=I$. Hence, a desirable property of $f$ would be: $\tr(f(\A))\geq 0$ with $\tr(f(\A))=0$ iff $\A=I$. 

In this section we show that for a large and important class of functions $f$, problem \eqref{eq:prob1} can be solved for $W$ implicitly in the feature space, i.e., the problem \eqref{eq:prob1} is \textit{kernelizable}. We assume that the kernel function $K_0(\x,\y)=\phi(\x)^T\phi(\y)$ between any two data points can be computed in $O(1)$ time. Denote $W^*$ as an optimal solution for \eqref{eq:prob1}. Now, we formally define \textit{kernelizable} metric learning problems. 
\begin{definition}
\label{def:kernelml}
An instance of metric learning problem~\eqref{eq:prob1} is \textit{kernelizable} if the following conditions hold:
\begin{itemize}
\item Problem~\eqref{eq:prob1} is solvable efficiently in time poly($n$, $m$) without explicit use of feature space vectors $X$. 
\item $\tr(\A^*Y\C Y^T)$, where $Y\in \mathbb{R}^{d\times N}$ is the feature space representation of any given data points, can be computed in time poly($N$) for all $C\in \mathbb{R}^{N\times N}$. 
\end{itemize}
\end{definition}

\begin{theorem}
  Let $f:\mathbb{R}\rightarrow \mathbb{R}$ be a function defined over the reals such that:
\begin{itemize}
\item $f(x)$ is a convex function.
\item A sub-gradient of $f(x)$ can be computed efficiently in $O(1)$ time. 
\item  $f(x)\geq 0\ \forall x$ with $f(\eta)=0$ for some $\eta\geq 0$. 
\end{itemize}
Consider the extension of $f$ to the spectrum of $\A\in \S_d^+$, i.e. $f(\A)=Uf(\Lambda)U^T$, where $\A=U\Lambda U^T$ is the eigenvalue decomposition of $\A$ (Definition 1.2, \cite{higham}). Assuming $X$ to be full-rank, i.e., $K_0=X^TX$ is invertible, problem \eqref{eq:prob1} is kernelizable (Definition~\ref{def:kernelml}). 
\label{thm:thm1}
\end{theorem}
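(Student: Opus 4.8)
The plan is to prove a representer-type theorem for problem~\eqref{eq:prob1} — the analogue of Theorem~\ref{thm:opt_kernel} for a general spectral objective — and then substitute it back to obtain an equivalent $n \times n$ problem expressed entirely through $K_0 = \X^T\X$. Let $P$ be the orthogonal projector onto the column space of $\X$ and $P^\perp = I - P$. The first observation is that every constraint sees $\A$ only through $P\A P$: since $\X = P\X$, we have $\tr(\A\X\C_i\X^T) = \tr\!\big((P\A P)\X\C_i\X^T\big)$. Thus the feasible set is completely insensitive to how $\A$ acts on $\mathrm{range}(\X)^\perp$ and to any coupling between $\mathrm{range}(\X)$ and its complement.

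The crux is a \emph{pinching} argument. Given any feasible $\A$, form $\widehat{\A} = P\A P + P^\perp\A P^\perp$. Pinching preserves positive semidefiniteness and leaves every constraint value unchanged, so $\widehat{\A}$ is feasible; moreover the eigenvalues of $\widehat{\A}$ are majorized by those of $\A$, and because $f$ is convex the map $\A \mapsto \tr(f(\A)) = \sum_i f(\lambda_i(\A))$ is Schur-convex, giving $\tr(f(\widehat{\A})) \le \tr(f(\A))$. Hence some optimal $\A$ is block diagonal with respect to $\mathrm{range}(\X) \oplus \mathrm{range}(\X)^\perp$. On the $P^\perp$ block no constraint is active, so we may minimize $\tr(f(\cdot))$ there independently; since $f(x) \ge 0$ with $f(\eta) = 0$ for some $\eta \ge 0$, the choice $\eta P^\perp$ is both feasible (as $\eta \ge 0$) and optimal (objective $0$). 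Writing $\eta P^\perp = \eta I - \eta P$ and absorbing the residual term supported on $\mathrm{range}(\X)$, we conclude that there is an optimal solution of the form $\A^* = \eta I + \X M \X^T$ for some symmetric $M \in \mathbb{R}^{n \times n}$.

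Next I would substitute this parameterization into~\eqref{eq:prob1}. Introduce the $n \times n$ variable $\A' = \eta I + K_0^{1/2} M K_0^{1/2}$, which is invertibly related to $M$ because $K_0$ is invertible. Both $\mathrm{range}(\X)$ and its complement are $\A^*$-invariant, and in the orthonormal basis $\X K_0^{-1/2}$ the restriction $\A^*|_{\mathrm{range}(\X)}$ equals $\A'$ while the remaining $d-n$ eigenvalues equal $\eta$; since $f(\eta) = 0$ this yields $\tr(f(\A^*)) = \tr(f(\A'))$. Using $\tr(\X M \X^T\X\C_i\X^T) = \tr(M K_0 \C_i K_0)$ and the cancellation of the two $\eta$ terms, each constraint rewrites as $\tr(\A'\,\widetilde{\C}_i) \le \b_i$ with $\widetilde{\C}_i = K_0^{1/2}\C_i K_0^{1/2}$, and $\A^* \succeq 0 \iff \A' \succeq 0$. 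Problem~\eqref{eq:prob1} therefore reduces to minimizing the convex function $\tr(f(\A'))$ over $\A' \succeq 0$ subject to $m$ linear constraints in an $n \times n$ matrix, with all data entering only through $K_0$. Because a subgradient of $f$ is available in $O(1)$ time, a subgradient of $\tr(f(\A'))$ follows from an eigendecomposition of $\A'$, so this finite-dimensional convex program is solvable in $\mathrm{poly}(n,m)$ time with no explicit use of $\X$; this verifies the first condition of Definition~\ref{def:kernelml}.

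For the out-of-sample condition, given feature-space points $Y \in \mathbb{R}^{d \times N}$ I would expand $\tr(\A^* Y \C Y^T) = \eta\,\tr(\C\,Y^TY) + \tr\!\big(M\,(\X^TY)\,\C\,(Y^T\X)\big)$, recovering $M = K_0^{-1/2}(\A'^* - \eta I)K_0^{-1/2}$ from the solution. Both $Y^TY$ and $\X^TY$ are computable from the kernel function alone, so the quantity is evaluated in $\mathrm{poly}(N)$ time, giving the second condition and completing the proof. I expect the representer step to be the main obstacle: one must check carefully that pinching to $\mathrm{range}(\X)$ (i) keeps the iterate positive semidefinite, (ii) leaves all constraint values fixed, and (iii) does not increase the spectral objective — the last point being exactly where convexity of $f$ enters, through the majorization inequality for pinchings. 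The hypotheses $f \ge 0$ and $f(\eta) = 0$ with $\eta \ge 0$ then pin down the component of $\A^*$ orthogonal to the data, and invertibility of $K_0$ makes the resulting $n \times n$ reduction exact.
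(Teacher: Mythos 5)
Your proposal is correct and shares the paper's overall architecture---first establish a representer-type result $W^* = \eta I + X S X^T$ (Lemma~\ref{lemma:thm1_1}), then substitute to obtain an $n \times n$ convex program expressed entirely through $K_0$---but the key lemma is proved by a genuinely different argument, and in fact a more robust one. The paper proves Lemma~\ref{lemma:thm1_1} by working directly with the eigendecomposition of an optimal $W$: eigenvalues whose eigenvectors are orthogonal to $\mathrm{range}(X)$ do not enter the linear constraints and are set to $\eta$, while the remaining eigenvectors are written as $X\bm{\alpha}_j$. This tacitly assumes that an optimal solution is block diagonal with respect to $\mathrm{range}(X) \oplus \mathrm{range}(X)^{\perp}$, which is not automatic for an arbitrary optimizer. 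Your pinching step supplies exactly this missing justification: $\widehat{W} = PWP + P^{\perp}WP^{\perp}$ is feasible because every constraint sees only $PWP$, it stays positive semidefinite, and $\tr(f(\widehat{W})) \le \tr(f(W))$ because the spectrum of a pinching is majorized by that of $W$ and $\sum_i f(\lambda_i)$ is Schur-convex when $f$ is convex; the hypotheses $f \ge 0$, $f(\eta)=0$, $\eta \ge 0$ then force the optimal $P^{\perp}$-block to be $\eta P^{\perp}$, and invertibility of $K_0$ lets you absorb $\eta P = X(\eta K_0^{-1})X^T$ into the $XMX^T$ term. The subsequent reduction is the paper's argument in different coordinates: the paper parameterizes via the SVD $X = U_X \Sigma_X V_X^T$ and reaches \eqref{eq:prob1new} with variable $S$ and conic constraint $S \succeq -\eta K_0^{-1}$, whereas your variable $W' = \eta I + K_0^{1/2} M K_0^{1/2}$ is orthogonally similar to the paper's $\Sigma_X V_X^T S V_X \Sigma_X + \eta I^n$ and turns that constraint into the standard $W' \succeq 0$---cosmetically cleaner, identical in content. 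You also verify the second condition of Definition~\ref{def:kernelml} (evaluating $\tr(W^* Y C Y^T)$ from $Y^TY$ and $X^TY$ alone) explicitly, which the paper leaves implicit in the form of $W^*$. Two trivial quibbles that do not affect correctness: the $\eta$ terms in your constraint rewriting match on both sides rather than ``cancel,'' and the poly$(n,m)$ solvability claim should be anchored to a concrete method as the paper does with the ellipsoid method---your eigendecomposition-based subgradient remark provides precisely the oracle that method needs.
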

To prove the above theorem, we need the following two lemmas:
\begin{lemma}
Assuming $f$ satisfies the conditions stated in Theorem~\ref{thm:thm1} and $X$ is full-rank, $\exists S^*\in \mathbb{R}^{n\times n}$ such that $\A^*=\eta I+\X\S^*\X^T$ is an optimal solution to \eqref{eq:prob1}.
\label{lemma:thm1_1}
\end{lemma}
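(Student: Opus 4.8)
The plan is to prove a representer-type statement: the constraints of~\eqref{eq:prob1} see $\A$ only through the subspace spanned by the columns of $\X$, so whatever part of an optimal $\A^*$ lives outside that subspace can be replaced by a scaled identity $\eta I$ without increasing the objective. Let $P=\X K_0^{-1}\X^T$ be the orthogonal projector onto $\mathrm{range}(\X)$ (well defined because $K_0=\X^T\X$ is invertible by the full-rank assumption) and $P^\perp=I-P$. First I would note that each $\X\C_i\X^T$ is symmetric with column space inside $\mathrm{range}(\X)$, hence $\X\C_i\X^T=P\,\X\C_i\X^T\,P$ and $\tr(\A\X\C_i\X^T)=\tr(P\A P\cdot\X\C_i\X^T)$; thus the feasible region depends on $\A$ only through $P\A P$.

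Given an optimal $\A^*$, I would introduce the candidate $\A'=P\A^*P+\eta P^\perp$. Feasibility is immediate: $P\A'P=P\A^*P$ gives the same constraint values, and $\A'\succeq 0$ since $P\A^*P\succeq 0$ and $\eta P^\perp\succeq 0$ act on orthogonal subspaces (using $\eta\ge 0$). Rewriting $\A'=\eta I+(P\A^*P-\eta P)$ and observing that $P\A^*P-\eta P$ is symmetric and supported on $\mathrm{range}(\X)$, it equals $\X\S^*\X^T$ with $\S^*=K_0^{-1}(\X^T\A^*\X-\eta K_0)K_0^{-1}$. Hence $\A'=\eta I+\X\S^*\X^T$ has exactly the asserted form, and it only remains to compare objectives.

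The crux is $\tr(f(\A'))\le\tr(f(\A^*))$. In a basis adapted to the splitting $\mathrm{range}(\X)\oplus\mathrm{range}(\X)^\perp$, write $A=P\A^*P$ (the $n\times n$ compression) and $C=P^\perp\A^*P^\perp$. Since $\A'$ is block diagonal with blocks $A$ and $\eta I_{d-n}$ and $f$ acts spectrally, $\tr(f(\A'))=\tr(f(A))+(d-n)f(\eta)=\tr(f(A))$, using $f(\eta)=0$. Next I would invoke the pinching inequality: the pinched matrix $\tilde{\A}=P\A^*P+P^\perp\A^*P^\perp$ has eigenvalues majorized by those of $\A^*$, so convexity of $f$ yields $\tr(f(\tilde{\A}))\le\tr(f(\A^*))$, i.e. $\tr(f(A))+\tr(f(C))\le\tr(f(\A^*))$. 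Finally, $f\ge 0$ makes $\tr(f(C))\ge 0$, so $\tr(f(\A'))=\tr(f(A))\le\tr(f(A))+\tr(f(C))\le\tr(f(\A^*))$, proving $\A'$ is also optimal.

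The main obstacle is this objective comparison, and in particular bringing in the pinching (majorization) inequality for convex spectral functions --- this is what upgrades ``erasing the off-diagonal coupling and the orthogonal-complement block'' into an honest non-increase of $\tr(f(\cdot))$. The three hypotheses on $f$ in Theorem~\ref{thm:thm1} then slot in exactly: convexity drives the pinching step, $f(\eta)=0$ annihilates the $(d-n)$ copies of $\eta$ coming from $\mathrm{range}(\X)^\perp$, and $f\ge 0$ lets me drop the residual $\tr(f(C))$ term; together they identify $\eta I$ as the correct filler off $\mathrm{range}(\X)$.
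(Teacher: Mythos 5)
Your proof is correct, and it takes a genuinely different route from the paper's. The paper works directly with an eigendecomposition $W=\sum_j \lambda_j \bm{u}_j\bm{u}_j^T$ of an optimal solution: it observes that any eigenvalue whose eigenvector is orthogonal to $\mathrm{range}(X)$ enters neither the linear constraints nor (beyond nonnegativity) the PSD constraint, sets all such eigenvalues to $\eta$ (the global minimizer of $f$, with $f(\eta)=0$), and writes the remaining eigenvectors as $\bm{u}_j=X\bm{\alpha}_j$ to obtain $W^*=\eta I+XS^*X^T$ with $S^*=\sum_{j\leq n}(\lambda_j^*-\eta)\bm{\alpha}_j\bm{\alpha}_j^T$. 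That argument tacitly assumes, via a ``without loss of generality,'' that the optimal eigenbasis splits cleanly between $\mathrm{range}(X)$ and its orthogonal complement --- which is not automatic for an arbitrary optimizer, since eigenvectors can mix the two subspaces. Your pinching argument is precisely the device that repairs this: starting from \emph{any} optimal $W^*$, the candidate $PW^*P+\eta P^{\perp}$ is feasible (the constraints see only $PWP$, and $\eta\geq 0$ gives positive semidefiniteness blockwise), and the majorization step plus $f\geq 0$ and $f(\eta)=0$ shows its objective is no larger, so it is itself optimal and of the required form. As a side remark, you can even avoid quoting the majorization machinery: with $U=P-P^{\perp}$, which is orthogonal, one has $PW^*P+P^{\perp}W^*P^{\perp}=\tfrac{1}{2}\bigl(W^*+UW^*U^T\bigr)$, so convexity and unitary invariance of $A\mapsto\tr(f(A))$ give the pinched comparison in one line. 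Your route also produces the explicit closed form $S^*=K_0^{-1}\bigl(X^TW^*X-\eta K_0\bigr)K_0^{-1}$, which the paper's eigenvector-coefficient expression leaves implicit; what the paper's proof buys in exchange is brevity, needing nothing beyond the identity $\tr(f(W))=\sum_j f(\lambda_j)$ for spectral functions.
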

\begin{proof}
Let $\A=U\Lambda U^T=\sum_j \lambda_j \bm{u}_j\bm{u}_j^T$ be the eigenvalue
decomposition of $\A$, where $\lambda_1\geq \lambda_2\geq \dots\geq \lambda_d\geq 0$. Consider a linear constraint $\tr(\A \X\C_i\X^T)\leq
b_i$ as specified in problem \eqref{eq:prob1}. Note that $\tr(\A
\X\C_i\X^T)=\sum_j\lambda_j \bm{u}_j^T\X\C_i\X^T\bm{u}_j$. Note that if the
$j$-th eigenvector $\bm{u}_j$ of $\A$ is orthogonal to the range space of
$X$, i.e. $X^T\bm{u}_j=0$, then the corresponding eigenvalue $\lambda_j$ is
not constrained (except for the non-negativity constraint imposed by the
positive semi-definiteness constraint). Since the range space of $X$ is at
most $n$-dimensional, without loss of generality we can assume that
$\lambda_j\geq 0, \forall j> n$ are not constrained by the linear inequality constraints in \eqref{eq:prob1}.

Furthermore, by the definition of a spectral function (Definition 1.2, \cite{higham}), $\tr(f(\A))=\sum_j f(\lambda_j)$. Since $f$ satisfies the conditions of Theorem~\ref{thm:thm1}, $f(\eta)=\min_x f(x)=0$. In order to minimize $\tr(f(\A))$, we can select  $\lambda_j^*=\eta\geq 0, \forall j> n$ (note that the non-negativity constraint is satisfied for this choice of $\lambda_j$). 
Furthermore, eigenvectors $\bm{u}_j,\ \forall j\leq n$, lie in the range space of $X$, i.e., $\forall j\leq n,\ \bm{u}_j=X\bm{\alpha}_j $ for some $\bm{\alpha}_j\in \mathbb{R}^n$. Hence, 
\begin{align*}
\A^*&=\sum_{j=1}^n\lambda_i^*\bm{u}_j^*\bm{u}_j^{*T}+\eta\sum_{j=n+1}^d \bm{u}_j^*\bm{u}_j^{*T},\\
&=\sum_{j=1}^n(\lambda_i^*-\eta)\bm{u}_j^*\bm{u}_j^{*T} +\eta\sum_{j=1}^d \bm{u}_j^*\bm{u}_j^{*T},\\
&=\sum_{j=1}^n\X((\lambda_j^*-\eta) \bm{\alpha}_j^*\bm{\alpha}_j^{*T})\X^T+\eta I^d,\\
&=\X S^*\X^T+\eta I^d,
\end{align*}
where $S^*=\sum_{j=1}^n(\lambda_j^*-\eta) \bm{\alpha}_j^*\bm{\alpha}_j^{*T}$.
\end{proof}

\begin{lemma}
If $n<d$ and $\X\in \mathbb{R}^{d\times n}$ has full column rank, i.e., $\X^T\X$ is invertible then:
$$\X\S\X^T\succeq 0\ \  \Longleftrightarrow\ \  \S\succeq 0.$$
  \label{lemma:lem3}
\end{lemma}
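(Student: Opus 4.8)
The plan is to prove the two implications separately, observing that only the reverse direction actually uses the full-column-rank hypothesis.

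For the forward direction ($\S\succeq 0\Rightarrow \X\S\X^T\succeq 0$), I would take any $\v\in\mathbb{R}^d$ and write $\v^T\X\S\X^T\v=(\X^T\v)^T\S(\X^T\v)\geq 0$, since $\X^T\v\in\mathbb{R}^n$ and $\S$ is positive semidefinite. Thus $\X\S\X^T\succeq 0$, and this direction requires no assumption on the rank of $\X$.

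For the reverse direction ($\X\S\X^T\succeq 0\Rightarrow \S\succeq 0$), the key observation is that full column rank makes the map $\v\mapsto\X^T\v$ surjective onto $\mathbb{R}^n$. Concretely, since $\X^T\X$ is invertible, for an arbitrary $\bm{w}\in\mathbb{R}^n$ I would set $\v=\X(\X^T\X)^{-1}\bm{w}\in\mathbb{R}^d$, so that $\X^T\v=(\X^T\X)(\X^T\X)^{-1}\bm{w}=\bm{w}$. Using $\X\S\X^T\succeq 0$, I then compute $\bm{w}^T\S\bm{w}=\v^T\X\S\X^T\v\geq 0$. Since $\bm{w}$ was arbitrary, $\S\succeq 0$ follows.

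The main (and essentially only) obstacle is identifying this preimage $\v$ in the reverse direction; everything else is a one-line quadratic-form manipulation. The hypothesis that $\X$ has full column rank is exactly what guarantees such a preimage exists for every $\bm{w}$, which is what fails otherwise: without it, $\S$ could carry a negative eigenvalue along a direction in the kernel of $\X$ while $\X\S\X^T$ stays positive semidefinite, so the equivalence genuinely relies on $\X^T\X$ being invertible.
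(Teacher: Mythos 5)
Your proof is correct and follows essentially the same route as the paper's: both directions reduce to the quadratic-form identity $\v^T\X\S\X^T\v=(\X^T\v)^T\S(\X^T\v)$, with the nontrivial direction resting on surjectivity of $\v\mapsto\X^T\v$ under the full-column-rank hypothesis. Your only addition is making the paper's surjectivity claim explicit via the preimage $\v=\X(\X^T\X)^{-1}\bm{w}$, which is a welcome but inessential refinement.
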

\begin{proof}
$\Longrightarrow$\\ $\X\S\X^T\succeq 0 \Longrightarrow \v^T \X\S\X^T \v\geq 0, \forall \v\in \mathbb{R}^d$.
Since $\X$ has full column rank, $\forall \q\in \mathbb{R}^n\ \exists \bm{v}\in \mathbb{R}^d $ s.t. $X^T\bm{v} =\q$. Hence, $\q^T\S\q=\v^T\X\S\X^T\v\geq 0, \forall \q\in \mathbb{R}^n \Longrightarrow \S\succeq 0$\\
$\Longleftarrow$\\
Now $\forall \v\in \mathbb{R}^d$, $\v^TXSX^T\v\geq 0$ as $S\succeq 0$. Thus $\X\S\X^T\succeq 0$.
\end{proof}

We now present a proof of Theorem~\ref{thm:thm1}. The key idea is to prove that \eqref{eq:prob1} can solved  implicitly by solving for $S^*$ of Lemma~\ref{lemma:thm1_1}. 
\begin{proof}{\bf [Theorem~\ref{thm:thm1}]}\\
Using Lemma~\ref{lemma:thm1_1}, $\A^*$ is of the form $\A^*=\eta I^d+\X
S^*\X^T$. Assuming $\X$ is full-rank, i.e., all the data points $\x_i$ are
linearly independent, then there is a one-to-one mapping between $\A^*$ and $S^*$. Hence, solving for $\A^*$ is \textit{equivalent} to solving for $S^*$. So, now our goal is to reformulate problem~\eqref{eq:prob1} in terms of $S^*$. 

\noindent Let $\X=U_X\Sigma_X V_X^T$ be the SVD of $X$. Then, 
\begin{align}
\A&=\eta I^d+\X S \X^T\nonumber,\\
&=\eta I^d+U_X\Sigma_X V_X^T S V_X \Sigma_X U_X^T\nonumber,\\
&=\left[U_X\ \  U_\perp\right]\left[\begin{matrix}\Sigma_X V_X^T S V_X \Sigma_X+\eta I^n &\ \  0\\[4pt]0&\eta I^{n-d}\end{matrix}\right]\left[\begin{matrix}U_X^T\\[4pt]U_\perp^T\end{matrix}\right],
\label{eq:prob1A}
\end{align}
where $U_\perp^TU=0$. 

Now, consider $f(\A)=f(\eta I^d+\X S \X^T)$. Using \eqref{eq:prob1A}:
\begin{align}
f(\A)&=f(\eta I^d+\X S \X^T)\nonumber,\\[4pt]
&=f\left(\left[U_X\ \  U_\perp\right]\left[\begin{matrix}\Sigma_X V_X^T S V_X \Sigma_X+\eta I^n &\ \  0\\[4pt]0&\eta I^{n-d}\end{matrix}\right]\left[\begin{matrix}U_X^T\\[4pt]U_\perp^T\end{matrix}\right]\right)\nonumber,\\[4pt]
&=\left[U_X\ \  U_\perp\right]f\left(\left[\begin{matrix}\Sigma_X V_X^T S V_X \Sigma_X+\eta I^n &\ \  0\\[4pt]0&\eta I^{n-d}\end{matrix}\right]\right)\left[\begin{matrix}U_X^T\\[4pt]U_\perp^T\end{matrix}\right]\nonumber,\\[4pt]
&=\left[U_X\ \  U_\perp\right]\left[\begin{matrix}f\left(\Sigma_X V_X^T S V_X \Sigma_X+\eta I^n\right) &\ \  0\\[4pt]0&0\end{matrix}\right]\left[\begin{matrix}U_X^T\\[4pt]U_\perp^T\end{matrix}\right]\nonumber,\\[4pt]
&=U_X f\left(\Sigma_X V_X^T S V_X \Sigma_X+\eta I^n\right)U_X^T\nonumber,
\end{align}
where the second equality follows from the property that $f(Q Z
Q^T)=Qf(Z)Q^T$ for an orthogonal $Q$ and a spectral function $f$. The third
equality follows from the property that
$f\left(\begin{matrix}A&0\\0&B\end{matrix}\right)=\left(\begin{matrix}f(A)&0\\0&f(B)\end{matrix}\right)$
and the fact that $f(\eta )=0$. Hence,  
\begin{equation}
\tr(f(\A))=f\left(\Sigma_X V_X^T S V_X \Sigma_X+\eta I^n\right).
  \label{eq:prob1f}
\end{equation}

Next, consider the constraint $\tr(\A\X\C_i\X^T)\leq b_i$. Note that 
\begin{equation}
\tr(\A\X\C_i\X^T)=\tr((\eta I^d+\X\S\X^T)\X\C_i\X^T)=\tr(\eta C_iK_0+C_iK_0\S K_0).
\end{equation}
Hence, the constraint $\tr(\A\X\C_i\X^T)\leq b_i$ reduces to:
\begin{equation}
\tr(\eta C_iK_0+C_iK_0\S K_0)\leq b_i.
\label{eq:prob1cons}
\end{equation}

Finally, consider the constraint $\A\succeq 0$. Using ~\eqref{eq:prob1A}, we see that this is equivalent to:
\begin{align}
  \eta I^n+\Sigma_X V_X^T S V_X \Sigma_X \succeq 0, \nonumber\\
  S \succeq - \eta K_0^{-1},
\label{eq:prob1psd}
\end{align}
where  $K_0=X^TX=V_X\Sigma_X^2 V_X^T$. 

Using \eqref{eq:prob1f}, \eqref{eq:prob1cons}, and \eqref{eq:prob1psd} we get the following problem which is equivalent to \eqref{eq:prob1}:
\begin{align}
  \min_{\S} &\quad f\left(\Sigma_X V_X^T S V_X \Sigma_X+\eta I^n\right)\nonumber\\
  \text{s.t.}& \quad \tr(\eta C_iK_0+C_iK_0\S K_0)\leq b_i,\quad \forall 1\leq i\leq m\nonumber\\
  & \quad S \succeq -\eta K_0^{-1}.
\label{eq:prob1new}
\end{align}
Note that the objective function is a strictly convex function of a linear
transformation of $S$, and hence is strictly convex in $S$. Furthermore, all
the constraints are linear in $S$.  As a result, problem \eqref{eq:prob1new} is a convex program. Also, both $\Sigma_X$ and $V_X$ can be computed efficiently in $O(n^3)$ steps using eigenvalue decomposition of $K_0=X^TX$. Hence, problem \eqref{eq:prob1} can be  solved efficiently in poly($n,m$) steps using standard convex optimization methods such as the ellipsoid method \cite{gla}. 
\end{proof}

\section{Special Cases}
In the previous section, we proved a general result on kernelization of metric learning. In this section, we further
consider a few special cases of interest: the von Neumann divergence, the squared Frobenius norm and semi-definite programming. For each of the cases, we derive the required optimization problem to be solved and mention the relevant optimization algorithms that can be used. 
\subsection{von Neumann Divergence}
The von Neumann divergence is a generalization of the well known KL-divergence to matrices. It is used extensively in quantum computing to compare density matrices of two different systems~\cite{nielsenbook}. It is also used in the exponentiated matrix gradient method by~\cite{tsuda}, online-PCA method by \cite{online-pca} and fast SVD solver by~\cite{kale}. The von Neumann divergence between $\A$ and $\A_0$ is defined to be:
$$\von(\A, \A_0)=\tr(\A\log \A-\A\log \A_0-\A+\A_0),$$
where both $\A$ and $\A_0$ are positive definite. The metric learning problem that corresponds to \eqref{eq:prob1} is:
\begin{align}
  \min_{\A} &\quad \von(\A, I)\nonumber\\
  \text{s.t.}& \quad \tr(\A\X\C_i\X^T)\leq \b_i,\quad \forall 1\leq i\leq m,\nonumber\\
  & \quad \A\succeq 0.
\label{eq:prob6}
\end{align}
It is easy to see that $\von(\A, I)=\tr(\fvN(\A))$, where $$\fvN(\A)=\A\log \A-\A+I=U\fvN (\Lambda)U^T,$$ where $\A=U\Lambda U^T$ is the eigenvalue decomposition of $\A$ and $\fvN:\mathbb{R}\rightarrow \mathbb{R}, \fvN(x)=x \log x - x+1$. Also, note that $\fvN(x)$ is a strictly convex function with $\argmin_x \fvN(x)=1$ and $\fvN(1)=0$. Hence, using Theorem~\ref{thm:thm1}, problem \eqref{eq:prob6} is kernelizable since $\von(\A,I)$ satisfies the required conditions. Using~\eqref{eq:prob1new}, the optimization problem to be solved is given by:
\begin{align}
  \min_{\S} &\quad \von\left(\Sigma_X V_X^T S V_X \Sigma_X+I^n, I^n\right)\nonumber\\
  \text{s.t.}& \quad \tr( C_iK_0+C_iK_0\S K_0)\leq b_i,\quad \forall 1\leq i\leq m\nonumber\\
  & \quad S \succeq - K_0^{-1},
\label{eq:vnnew}
\end{align}
Next, we derive a simplified version of the above optimization problem. 

Note that $\von(\cdot,\cdot)$ is defined only for positive semi-definite matrices. Hence, the constraint $S \succeq - K_0^{-1}$ should be satisfied if the above problem is feasible. Thus, the reduced optimization problem is given by:
\begin{align}
  \min_{\S} &\quad \von\left(\Sigma_X V_X^T S V_X \Sigma_X+I^n, I^n\right)\nonumber\\
  \text{s.t.}& \quad \tr( C_iK_0+C_iK_0\S K_0)\leq b_i,\quad \forall 1\leq i\leq m.
\label{eq:vnnew1}
\end{align}
Note that the von-Neumann divergence is a Bregman matrix divergence (see Equation~\eqref{eq:breg_div}) with the generating function $\phi(\X)=\tr(\X\log\X-\X)$. Now using Lemma~\ref{lemma:breg_opt} and simplifying using the fact that $\frac{\partial \tr(\X\log\X)}{\partial \X}=\log\X$, we get the following dual for problem~\eqref{eq:prob6}:
\begin{align}
  \max_{\lambda} &\quad -\tr(\exp(-\Sigma_X V_X^T \C(\lambda) V_X \Sigma_X))-b(\lambda)\nonumber\\
  \text{s.t.}& \quad \lambda\geq 0,
\end{align}
where $\C(\lambda)=\sum_i\lambda_i\C_i$ and $\b(\lambda)=\sum_i\lambda_i\b_i$. 

Now, using $V_X \Sigma_X^2 V_X^T=K_0$  we see that: $\tr(-\Sigma_X V_X^T \C(\lambda) V_X \Sigma_X)^k)=\tr((- \C(\lambda) K_0)^k)$. Next, using the Taylor series expansion for the matrix exponential:
\begin{align*}
\tr(\exp(-\Sigma_X V_X^T \C(\lambda) V_X \Sigma_X))&=\tr\left(\sum_{i=0}^\infty \frac{(-\Sigma_X V_X^T \C(\lambda) V_X \Sigma_X)^i}{i!}\right)\\&=\sum_{i=0}^\infty \frac{\tr\left((-\Sigma_X V_X^T \C(\lambda) V_X \Sigma_X)^i\right)}{i!}\\&=\sum_{i=0}^\infty \frac{\tr\left((- \C(\lambda) K_0)^i\right)}{i!}=\tr(\exp(- \C(\lambda) K_0)).
\end{align*}

Hence, the resulting dual problem is given by: 
\begin{align}
  \min_\lambda&\quad F(\lambda)=\tr(\exp(-\C(\lambda)\K_0))+\b(\lambda)\nonumber\\
  \text{s.t.}&\quad \lambda\geq 0.\label{eq:prob6_dual}
\end{align}
Also, $\frac{\partial F}{\partial \lambda_i}=\tr(\exp(-\C(\lambda)\K_0)\C_i\K_0)+\b_i$. Hence, any first order smooth optimization method can be used to solve the above dual problem. Also, similar to \cite{kulis}, a Bregman's cyclic projection method can be used to solve the primal problem ~\eqref{eq:vnnew1}. 
\subsection{Squared Frobenius Divergence}
The squared Frobenius norm divergence is defined as:
$$\frob(\A, \A_0)=\frac{1}{2}\|\A-\A_0\|_F^2,$$
and is a popular measure of distance between matrices. Consider the following instance of~\eqref{eq:prob1} with the squared Frobenius divergence as the objective function:
\begin{align}
  \min_{\A} &\quad \frob(\A, \eta I)\nonumber\\
  \text{s.t.}& \quad \tr(\A\X\C_iX^T)\leq \b_i,\quad \forall 1\leq i\leq m,\nonumber\\
  & \quad \A\succeq 0.
\label{eq:probfrob}
\end{align}
Note that for $\eta=0$ and $C_i=(\e_a-\e_b)(\e_a-\e_b)^T-(\e_a-\e_c)(\e_a-\e_c)^T$ (relative distance constraints), the above problem \eqref{eq:probfrob} is the same as the one proposed by \cite{shalev}. Below we see that, similar to \cite{shalev}, Theorem~\ref{thm:thm1} in Section~\ref{sec:loss} guarantees kernelization for a more general class of Frobenius divergence based objective functions. 

It is easy to see that $\frob(\A, \eta I)=\tr(\ffrob(\A))$, where $$\ffrob(\A)=(\A-\eta I)^T(\A-\eta I)=U\ffrob(\Lambda)U^T,$$ $\A=U\Lambda U^T$ is the eigenvalue decomposition of $\A$ and $\ffrob:\mathbb{R}\rightarrow \mathbb{R}, \ffrob(x)=(x-\eta)^2$. Note that $\ffrob(x)$ is a strictly convex function with $\argmin_x \ffrob(x)=\eta$ and $\ffrob(\eta)=0$. Hence, using Theorem~\ref{thm:thm1}, problem ~\eqref{eq:prob6} is kernelizable since $\frob(\A,\eta I)$ satisfies the required conditions. Using~\eqref{eq:prob1new}, the optimization problem to be solved is given by:
\begin{align}
  \min_{\S} &\quad \|\Sigma_X V_X^T S V_X \Sigma_X\|_F^2\nonumber\\
  \text{s.t.}& \quad \tr( \eta C_iK_0+C_iK_0\S K_0)\leq b_i,\quad \forall 1\leq i\leq m\nonumber\\
  & \quad S \succeq - \eta K_0^{-1},
\label{eq:frobnew}
\end{align}
Also, note that $\|\Sigma_X V_X^T S V_X \Sigma_X\|_F^2=\tr(K_0SK_0S)$. The above problem can be solved using standard convex optimization techniques like interior point methods. 
\subsection{SDPs}
In this section we consider the case when the objective function in~\eqref{eq:prob1} is a linear function. A similar formulation for metric learning was proposed by~\cite{wein}. We consider the following generic semidefinite program (SDP) to learn a linear transformation $\A$:
\begin{align}
  \min_{\A}\ &\quad \tr(\X\C_0\X^T\A)\nonumber\\
  \text{s.t.}& \quad \tr(\A\X\C_i\X^T)\leq \b_i,\quad \forall 1\leq i\leq m\nonumber\\
  & \quad \A\succeq 0.
\label{eq:prob8}
\end{align}
Here we show that this problem can be efficiently solved for high dimensional data in its kernel space. 
\begin{theorem}
Problem~\eqref{eq:prob8} is kernelizable. 
  \label{thm:thm5}
\end{theorem}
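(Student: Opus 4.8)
The plan is to verify the two conditions of Definition~\ref{def:kernelml} directly, via a representer-type reduction of the SDP~\eqref{eq:prob8} to an equivalent problem over an $n\times n$ matrix expressed purely in terms of the input kernel matrix $K_0=\X^T\X$. A key point is that, unlike in the general Theorem~\ref{thm:thm1}, here the objective $\tr(\X\C_0\X^T\A)$ is \emph{linear} rather than a strictly convex spectral function, so it is not of the form $\tr(f(\A))$ with $f$ satisfying the hypotheses of Theorem~\ref{thm:thm1}. I would therefore give a separate (and in fact simpler) argument, effectively taking $\eta=0$, while assuming as before that $\X$ has full column rank so that $K_0$ is invertible.

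First I would establish the representer step: there is an optimal solution of the form $\A^*=\X S^*\X^T$ for some symmetric $S^*\in\mathbb{R}^{n\times n}$. Writing the eigendecomposition $\A=\sum_j\lambda_j\bm u_j\bm u_j^T$, both the objective $\tr(\A\X\C_0\X^T)=\sum_j\lambda_j\,\bm u_j^T\X\C_0\X^T\bm u_j$ and every constraint $\tr(\A\X\C_i\X^T)$ depend only on the components of the eigenvectors lying in the range space of $\X$; any eigenvector $\bm u_j$ orthogonal to that range space (so $\X^T\bm u_j=0$) contributes nothing. Since the range space is at most $n$-dimensional, I can set the corresponding eigenvalues to $0$ without changing the objective or violating any constraint, and this preserves $\A\succeq0$. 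This is exactly Lemma~\ref{lemma:thm1_1} with $\eta=0$, and it leaves $\A^*$ with range contained in that of $\X$, hence $\A^*=\X S^*\X^T$.

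Next I would carry out the reduction. Substituting $\A=\X S\X^T$ and using $\X^T\X=K_0$ together with the cyclic property of the trace, the objective becomes $\tr(S K_0\C_0 K_0)$ and each constraint becomes $\tr(S K_0\C_i K_0)\le \b_i$. For the regime of interest for kernelization ($n<d$, with $\X$ of full column rank), Lemma~\ref{lemma:lem3} converts the semidefinite constraint $\X S\X^T\succeq0$ into the equivalent $S\succeq0$, and the map $S\mapsto\X S\X^T$ is a bijection onto symmetric matrices whose range lies in that of $\X$. Problem~\eqref{eq:prob8} is thus equivalent to
\begin{align}
  \min_{S} &\quad \tr(S K_0\C_0 K_0)\nonumber\\
  \text{s.t.}& \quad \tr(S K_0\C_i K_0)\le \b_i,\quad \forall\,1\le i\le m,\nonumber\\
  & \quad S\succeq0,\nonumber
\end{align}
an SDP over an $n\times n$ matrix whose data are $K_0$, $\C_0$, and the $\C_i$. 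This involves no explicit feature-space vectors and is solvable in poly($n,m$) time by standard SDP methods, giving the first condition of Definition~\ref{def:kernelml}.

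Finally, for the out-of-sample condition, given feature-space data $Y\in\mathbb{R}^{d\times N}$ and any $\C\in\mathbb{R}^{N\times N}$ I would write $\tr(\A^*Y\C Y^T)=\tr(\X S^*\X^T Y\C Y^T)=\tr\!\big(S^*(\X^TY)\C(\X^TY)^T\big)$, where $\X^TY$ is the $n\times N$ matrix of kernel values $\kappa_0(\x_i,\y_l)$, computable in $O(nN)$ kernel evaluations; the remaining trace costs poly($n,N$), establishing the second condition. The main obstacle is the representer step together with the PSD reduction: I must argue carefully that discarding the out-of-range eigen-directions loses no optimality (which is immediate here \emph{only} because the linear objective is completely indifferent to those eigenvalues, in contrast to the strictly convex case), that Lemma~\ref{lemma:lem3} genuinely renders the reduced problem equivalent, and that a finite optimum of~\eqref{eq:prob8} corresponds to a finite optimum of the reduced SDP.
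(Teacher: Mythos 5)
Your proof is correct in substance but takes a genuinely different route from the paper. The paper does not kernelize the SDP \eqref{eq:prob8} directly: noting that the linear objective makes the problem non-strictly convex, with possibly many optimal solutions, it instead regularizes the problem---adding either a squared Frobenius term $\frac{\gamma}{2}\|W\|_F^2$ as in \eqref{eq:prob8frob} or a log-determinant term $-\gamma\log\det W$ as in \eqref{eq:prob8log}---then rewrites each regularized problem variationally by moving the linear objective into a constraint $\tr(XC_0X^TW)\leq t$, so that the inner minimization over $W$ falls under Theorem~\ref{thm:thm1} (the Frobenius case with $\eta=0$, and the LogDet case), yielding the kernelized program \eqref{eq:sdpfrobnew}. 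You instead kernelize the unregularized SDP itself: a representer step with $\eta=0$ gives $W^*=XS^*X^T$, substitution together with Lemma~\ref{lemma:lem3} produces an $n\times n$ SDP in $S$ whose data involve only $K_0$ and the $C_i$, and out-of-sample evaluation follows from $\tr(W^*YCY^T)=\tr\bigl(S^*(X^TY)C(X^TY)^T\bigr)$. Your approach buys a direct proof of the literal statement and a simpler reduced program; the paper's approach buys existence and a canonical selection among the possibly multiple optima (the minimum-Frobenius-norm or maximum-determinant solution), which matters because with a linear objective the infimum of \eqref{eq:prob8} need not be attained---your argument presupposes an optimal $W^*$ exists, a caveat you correctly flag but do not resolve, whereas the Frobenius-regularized objective is coercive and the issue disappears.

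One repair to your representer step: an eigenvector of a feasible $W$ need not be either contained in or orthogonal to the range of $X$, so ``setting the eigenvalues of the orthogonal eigenvectors to zero'' does not by itself leave a matrix of the form $XSX^T$. The clean fix, available precisely because your objective and constraints are linear in $W$, is to replace $W$ by $PWP$ where $P=XK_0^{-1}X^T$ is the orthogonal projector onto the range of $X$: since $PX=X$, the objective and every constraint value $\tr(WXC_iX^T)=\tr(PWP\,XC_iX^T)$ are unchanged, $PWP\succeq 0$, and $PWP=XSX^T$ with $S=K_0^{-1}X^TWXK_0^{-1}$. (The paper's own Lemma~\ref{lemma:thm1_1} glosses the same point, so this looseness is inherited rather than introduced by you.)
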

\begin{proof}
\eqref{eq:prob8} has a linear objective, i.e., it is a non-strict convex problem that may have multiple solutions. A variety of regularizations can be considered that lead to slightly different solutions. Here, we consider two regularizations:
\begin{itemize}
\item {\bf Frobenius norm}: We add a squared Frobenius norm regularization to
  \eqref{eq:prob8} so as to find the minimum Frobenius norm solution to
  \eqref{eq:prob8} (when $\gamma$ is sufficiently small):
\begin{align}
  \min_{\A}\ &\quad \tr(\X\C_0\X^T\A)+\frac{\gamma}{2} \|\A\|_F^2\nonumber\\
  \text{s.t.}& \quad \tr(\A\X\C_i\X^T)\leq \b_i,\quad \forall 1\leq i\leq m,\nonumber\\
  & \quad \A\succeq 0.
\label{eq:prob8frob}
\end{align}
Consider the following variational formulation of the problem:
\begin{align}
  \min_t \min_{\A}\ &\quad t+\gamma \|\A\|_F^2\nonumber\\
  \text{s.t.}& \quad \tr(\A\X\C_i\X^T)\leq \b_i,\quad \forall 1\leq i\leq m\nonumber\\
&\quad \tr(\X\C_0\X^T\A)\leq t\nonumber\\
  & \quad \A\succeq 0.
\label{eq:prob8frob1}
\end{align}
Note that for constant $t$, the inner minimization problem in the above problem is similar to \eqref{eq:probfrob} and hence can be kernelized. Corresponding optimization problem is given by:
\begin{align}
  \min_{\S, t} &\quad t+\gamma \tr(K_0 S K_0 S)\nonumber\\
  \text{s.t.}& \quad \tr( C_iK_0\S K_0)\leq b_i,\quad \forall 1\leq i\leq m\nonumber\\
&\quad \tr(\C_0K_0\S K_0)\leq t\nonumber\\
  & \quad S \succeq 0,
\label{eq:sdpfrobnew}
\end{align}
Similar to \eqref{eq:frobnew}, the above problem can be solved using convex optimization methods. 
\item {\bf Log determinant}: In this case we seek the solution to \eqref{eq:prob8} with minimum determinant. To this effect, we add a log-determinant regularization:
\begin{align}
  \min_{\A}\ &\quad \tr(\X\C_0\X^T\A)-\gamma \log\det \A\nonumber\\
  \text{s.t.}& \quad \tr(\A\X\C_i\X^T)\leq \b_i,\quad \forall 1\leq i\leq m,\nonumber\\
  & \quad \A\succeq 0.
\label{eq:prob8log}
\end{align}
The above regularization was also considered by \cite{brian_sdp}, which provided a fast projection algorithm for the case when each $C_i$ is a one-rank
matrix and discussed conditions for which the optimal solution to the
regularized problem is an optimal solution to the original SDP. 

Consider the following variational formulation of \eqref{eq:prob8log}:
\begin{align}
  \min_t\min_{\A}\ &\quad t-\gamma \log\det \A\nonumber\\
  \text{s.t.}& \quad \tr(\A\X\C_i\X^T)\leq \b_i,\quad \forall 1\leq i\leq m,\nonumber\\
& \quad \tr(\X\C_0\X^T\A)\leq t,\nonumber\\
  & \quad \A\succeq 0.
\label{eq:prob8log1}
\end{align}

 Note that the objective function of the inner optimization problem of \eqref{eq:prob8log1} satisfies the conditions of Theorem~\ref{thm:thm1}, and hence \eqref{eq:prob8log1} or equivalently \eqref{eq:prob8log} is kernelizable. 
\end{itemize}
\end{proof}
\section{Experimental Results}
In Section~\ref{sec:alg}, we presented metric learning as a constrained
LogDet optimization problem to learn a linear transformation, and we showed that the problem can be
efficiently kernelized.  Kernelization yields two fundamental advantages
over standard non-kernelized metric learning.  First, a non-linear kernel
can be used to learn non-linear decision boundaries common in applications
such as image analysis. Second, in Section~\ref{sec:itml_large}, we showed that the kernelized problem can be
learned with respect to a reduced basis of size $k$, admitting a learned
kernel parameterized by $O(k^2)$ values.  When the number of training
examples $n$ is large, this represents a substantial improvement over
optimizing over the entire $O(n^2)$ kernel matrix, both in terms of
computationally efficiency as well as statistical robustness.  

In this section, we present experiments from two domains: text analysis and imaging
processing.  As mentioned, image data sets tend to have highly non-linear
decision boundaries.  To this end, we learn a kernel matrix when the baseline kernel $K_0$ is the pyramid
match kernel, a method specifically designed for object/image recognition~\cite{iccv2005}.  In contrast, text data sets tend to perform quite
well with linear models, and the text experiments presented here have large
training sets.  We show that high quality metrics can be learned using a
relatively small set of basis vectors.

We evaluate performance of our learned distance metrics in the context of
classification accuracy for the $k$-nearest neighbor algorithm.  Our 
$k$-nearest neighbor classifier uses $k=10$ nearest neighbors (except for section~\ref{sec:image_exps} where we use $k=1$), breaking ties
arbitrarily.  We select the value of $k$ arbitrarily and expect to get slightly better accuracies using cross-validation. Accuracy is defined as the number of correctly classified
examples divided by the total number of classified examples.

For our proposed algorithms, pairwise constraints are inferred from true class 
labels.  For each class $i$, 100 pairs of points are randomly chosen from within
 class $i$ and are constrained to be similar, and 100 pairs of points are
drawn from classes other than $i$ to form dissimilarity constraints.  Given $c$
classes, this results in $100c$ similarity constraints, and $100c$
dissimilarity constraints, for a total of $200c$ constraints.  The upper and
lower bounds for the similarity and dissimilarity constraints are determined
empirically as the $1^{st}$ and $99^{th}$ percentiles of the distribution of
distances computed using a baseline Mahalanobis distance parameterized by
$\A_0$. Finally, the slack penalty parameter $\gamma$ used by our algorithms
is cross-validated using values $\{.01, .1, 1, 10, 100, 1000\}$.

All metrics are trained using data only in the training set.  Test instances
are drawn from the test set and are compared to examples in the training set
using the learned distance function.  The test and training sets are
established using a standard two-fold cross validation approach. For
experiments in which a baseline distance metric is evaluated (for example,
the squared Euclidean distance), nearest neighbor searches are again
computed from test instances to only those instances in the training set.

\subsection{Low-Dimensional Data Sets}
\begin{figure*}
\centering
\includegraphics[width=10cm]{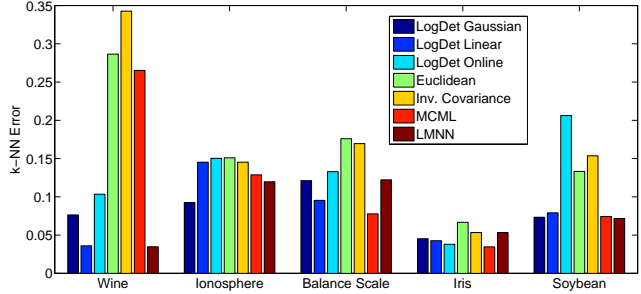}
\caption{Results over benchmark UCI data sets.  LogDet metric learning was
  run with in input space (LogDet Linear) as well as in kernel space with a
  Gaussian kernel (LogDet Gaussian).}
\label{fig:uci}
\end{figure*}
First we evaluate our metric learning method on the standard UCI datasets in
the low-dimensional (non-kernelized) setting, to directly compare with several
existing metric learning methods. 
In Figure~\ref{fig:uci}, we compare LogDet Linear ($K_0$ equals  the linear kernel) and the LogDet Gaussian ($K_0$ equals Gaussian kernel in kernel space) algorithms against existing
metric learning methods for $k$-NN classification. We use the squared
Euclidean distance, $d(\bm{x}, \bm{y}) =
(\bm{x}-\bm{y})^T(\bm{x}-\bm{y})$ as a baseline method.  We also use a
Mahalanobis distance parameterized by the inverse of the sample
covariance matrix.  This method is equivalent to first performing a
standard PCA whitening transform over the feature space and then
computing distances using the squared Euclidean distance.  We compare
our method to two recently proposed algorithms: Maximally Collapsing
Metric Learning~\cite{roweis} (MCML), and metric learning via Large
Margin Nearest Neighbor~\cite{wein} (LMNN).  Consistent with existing
work such as \cite{roweis}, we found the method of~\cite{xing} to be very slow
and inaccurate, so the latter was not included in our experiments.  As seen in Figure~\ref{fig:uci}, LogDet Linear and LogDet Gaussian algorithms obtain somewhat higher accuracy for most of the datasets. 

\begin{figure*}[ht]
\hspace*{-.3in}
$\begin{array}{c@{\hspace{.2in}}c@{\hspace{.2in}}c}
  \includegraphics[width=3.1in]{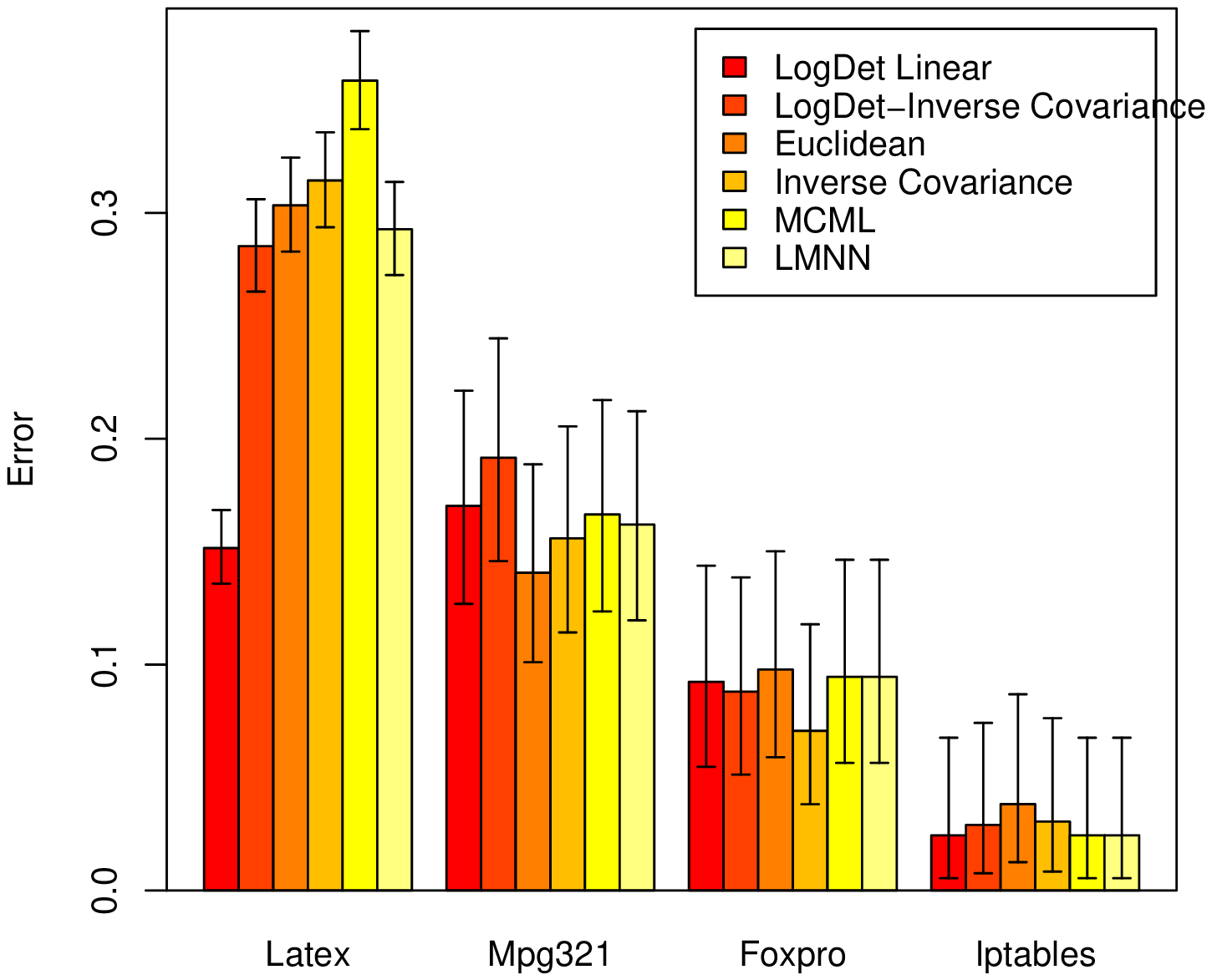} &
  \includegraphics[width=3.1in]{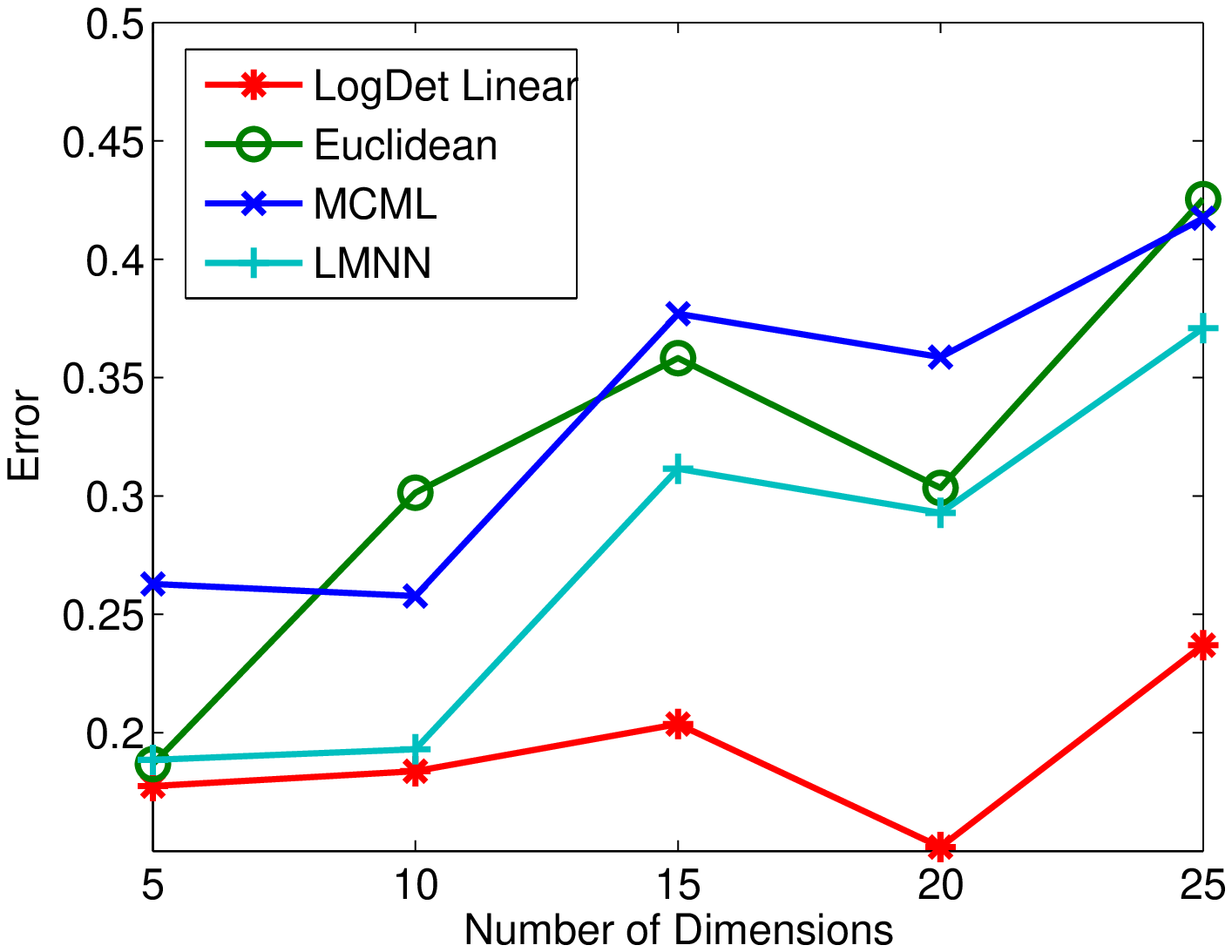} \\[-12pt]
  (a) \mbox{ Clarify Datasets}  & (b) \mbox{ Latex}
\end{array}$
\caption{Classification error rates for $k$-nearest neighbor software support  via different learned metrics.  We see in figure (a)  that
  LogDet Linear is the only algorithm to be optimal (within the $95\%$
  confidence intervals) across all datasets.  LogDet is also robust at
  learning metrics over higher dimensions.  In (b), we see that the error
  rate for the Latex dataset stays relatively constant for LogDet Linear.
}
\label{fig:results}
\end{figure*}

In addition to our evaluations on standard UCI datasets, we also apply our algorithm to the recently
proposed problem of nearest neighbor software support for the Clarify system~\cite{clarify}.  The basis
of the Clarify system lies in the fact that modern software design promotes
modularity and abstraction.  When a program terminates abnormally, it is often unclear which component should
be responsible for (or is capable of) providing an error report.  The system works by monitoring a set of
predefined program features (the datasets presented use function
counts) during program runtime which are then used by a classifier in the
event of abnormal program termination.  Nearest neighbor searches are
particularly relevant to this problem.  Ideally, the neighbors returned
should not only have the correct class label, but should also represent
those with similar program configurations or program inputs.  Such a
matching can be a powerful tool to help users diagnose the root cause of
their problem.  The four datasets we use correspond to the following softwares: Latex (the document
compiler, 9 classes), Mpg321 (an mp3 player, 4 classes), Foxpro (a database
manager, 4 classes), and Iptables (a Linux kernel application, 5 classes).

Our experiments on the Clarify system, like the UCI data, are over
fairly low-dimensional data.
It was shown~\cite{clarify} that high classification accuracy can be obtained
by using a relatively small subset of available features.  Thus, for each
dataset, we use a standard information gain feature selection test to obtain
a reduced feature set of size 20.  From this, we learn metrics for $k$-NN
classification using the methods developed in this paper. Results are given in Figure~\ref{fig:results}(b). The LogDet Linear algorithm yields
significant gains for the Latex benchmark.  Note that for datasets where
Euclidean distance performs better than using the inverse covariance metric,
the LogDet Linear algorithm that normalizes to the standard Euclidean
distance yields higher accuracy than that regularized to the inverse
covariance matrix (LogDet-Inverse Covariance). In general, for the Mpg321, Foxpro,
and Iptables datasets, learned metrics yield only marginal gains over the
baseline Euclidean distance measure.

Figure~\ref{fig:results}(c) shows the error rate for the Latex datasets with
a varying number of features (the feature sets are again chosen using the
information gain criteria).  We see here that LogDet Linear is surprisingly robust. 
Euclidean distance, MCML, and LMNN all achieve their best error rates for
five dimensions.  LogDet Linear, however, attains its lowest error rate of .15 at
$d=20$ dimensions.

In Table~\ref{tbl:time}, we see that LogDet Linear generally learns metrics
significantly faster than other metric learning algorithms.  The
implementations for MCML and LMNN were obtained from their respective
authors.  The timing tests were run on a dual processor 3.2 GHz Intel Xeon
processor running Ubuntu Linux.  Time given is in seconds and represents the
average over 5 runs.

\begin{table}
\centering
\caption{Training time (in seconds) for the results presented in Figure~\ref{fig:results}(b).  }
\label{tbl:time}
\begin{tabular}{|l||c|c|c|}
\hline
Dataset & LogDet Linear & MCML & LMNN \\
\hline
\hline
Latex & {\bf 0.0517} & 19.8 & 0.538 \\ \hline
Mpg321 & {\bf 0.0808} & 0.460 & 0.253 \\ \hline
Foxpro & {\bf 0.0793} & 0.152 & 0.189 \\ \hline
Iptables & 0.149 & {\bf 0.0838} & 4.19 \\ \hline
\end{tabular}
\end{table}

We also present some semi-supervised clustering results for two of the UCI data sets.  Note that both
MCML and LMNN are not amenable to optimization subject to pairwise distance
constraints.  Instead, we compare our method to the semi-supervised
clustering algorithm HMRF-KMeans~\cite{sugato}.  We use a standard 2-fold
cross validation approach for evaluating semi-supervised clustering results.
Distances are constrained to be either similar or dissimilar, based on class
values, and are drawn only from the training set.  The entire dataset is
then clustered into $c$ clusters using $k$-means (where $c$ is the number of classes) and
error is computed using only the test set.  Table~\ref{tbl:ss} provides
results for the baseline $k$-means error, as well as semi-supervised
clustering results with 50 constraints.

\begin{table}
\centering
\caption{Unsupervised $k$-means clustering error using the baseline squared Euclidean distance, along with semi-supervised
  clustering error with 50 constraints.}
\label{tbl:ss}
\begin{tabular}{|l||c|c|c|}
\hline
Dataset & Unsupervised & LogDet Linear & HMRF-KMeans \\
\hline 
\hline 
Ionosphere  &0.314  & {\bf 0.113}  & 0.256 \\ \hline 
Digits-389  &0.226  & {\bf 0.175 }& 0.286 \\\hline 
\end{tabular}
\end{table}
\subsection{Metric Learning for Object Recognition}
\label{sec:image_exps}
\begin{figure}
\centering
\includegraphics[width=\textwidth]{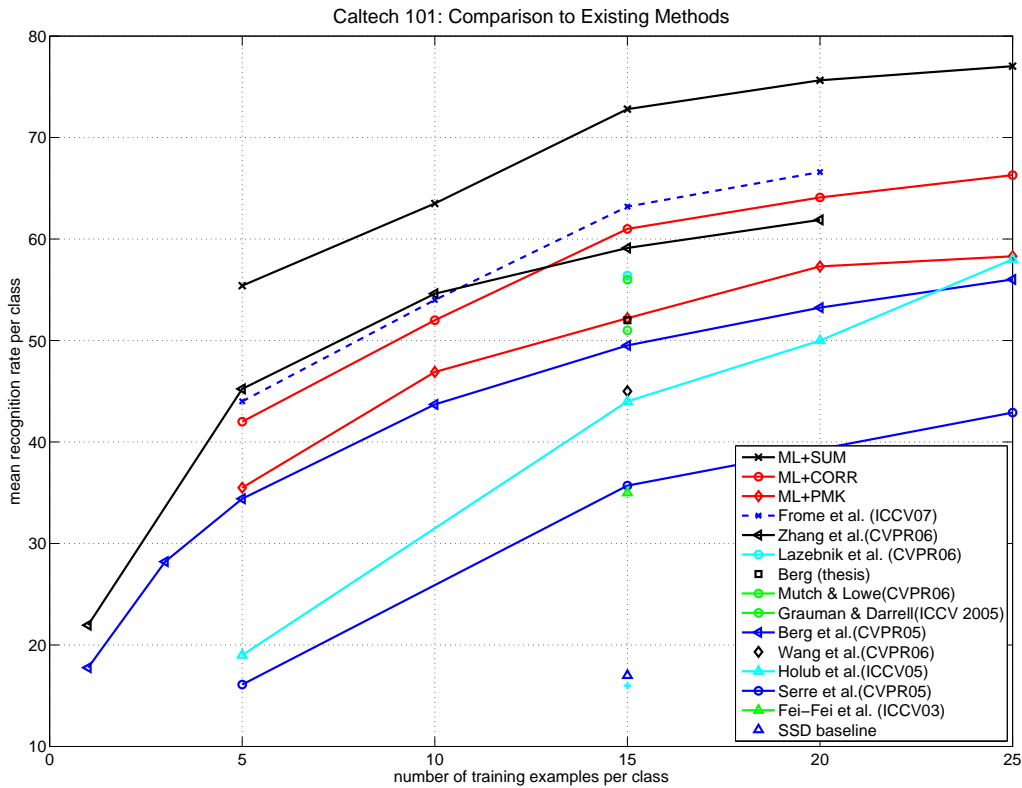}
\caption{Caltech-101: Comparison of LogDet based metric learning method with
  other state-of-the-art object recognition methods. Our method outperforms all other single metric/kernel approaches.  ML+SUM refers to our learned kernel when the average of four kernels (PMK~\cite{iccv2005}, SPMK~\cite{spmk}, Geoblur-1, Geoblur-2~\cite{geoblur}) is the base kernel, ML+PMK refers to the learned kernel over the pyramid match~\cite{iccv2005} as the base kernel, and ML+CORR refers to the learned kernel when the correspondence kernel of~\cite{zhang-cvpr06} is the base kernel.}\label{fig:caltech101all}
\end{figure}

\begin{figure}[t]
\centering
 \includegraphics[width=10cm]{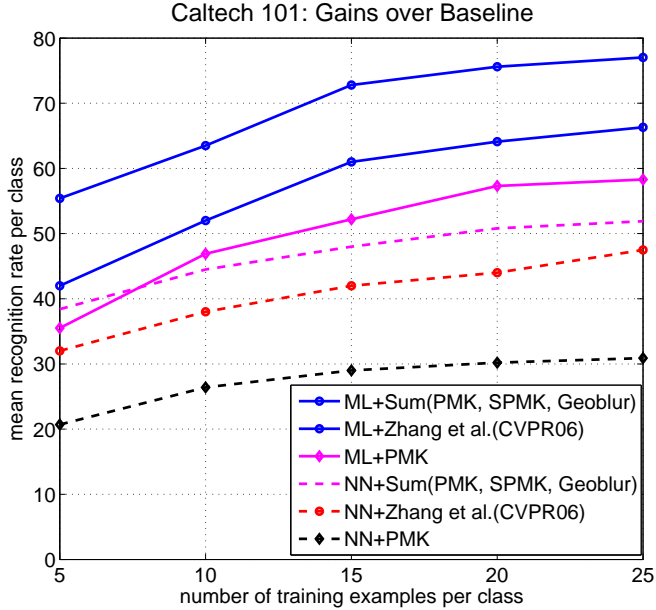}
\caption{Object recognition on the Caltech-101 dataset. Our learned kernels significantly improve NN recognition accuracy relative to their non-learned counterparts, the SUM (average of four kernels), the CORR and PMK kernels.} \label{fig:caltech101}
\end{figure}
Next we evaluate our method over high-dimensional data applied to the object-recognition task using Caltech-101 \cite{caltech101}, a common benchmark for this task. The goal is to predict the category of the object in the given image using a $k$-NN classifier. 

We compute distances between images using learning kernels with three
different base image kernels: 1) PMK: Grauman and Darrell's pyramid match kernel
~\cite{iccv2005} applied to SIFT features, 2) CORR: the kernel designed by
~\cite{zhang-cvpr06} applied to geometric blur features
, and 3) SUM: the average of four image kernels, namely, PMK \cite{iccv2005}, Spatial PMK \cite{spmk}, Geoblur-1, and Geoblur-2 \cite{geoblur}. Note that the underlying
dimensionality of these embeddings are typically in the millions of dimensions.

We evaluate the effectiveness of metric/kernel learning on this dataset.  We pose a $k$-NN classification task, and evaluate both the
original (SUM, PMK or CORR) and learned kernels. We set $k=1$ for our experiments; this value was chosen arbitrarily. We vary the number of training examples $T$ per class for the database, using the remainder as test examples, and measure accuracy in terms of the mean recognition rate per class, as is standard practice for this dataset.

Figure~\ref{fig:caltech101all} shows our results relative to several other existing techniques that have been applied to this dataset.  Our approach outperforms all existing single-kernel classifier methods when using the learned CORR kernel: we achieve 61.0\% accuracy
for $T=15$ and 69.6\% accuracy for $T=30$.  Our learned PMK achieves 52.2\%
accuracy for $T=15$ and 62.1\% accuracy for $T=30$. Similarly, our learned
SUM kernel achieves $73.7\%$ accuracy for $T=15$. Figure~\ref{fig:caltech101}
specifically shows the comparison of the original baseline kernels for NN classification.  The plot reveals gains in 1-NN classification accuracy; notably, our learned kernels with simple NN classification also outperform the baseline kernels when used with SVMs~\cite{zhang-cvpr06,iccv2005}.   

\subsection{Metric Learning for Text Classification}
Next we present results in the text domain.  Our text datasets are created
by standard bag-of-words Tf-Idf 
representations.  Words are stemmed using a standard Porter stemmer and
common stop words are removed, and the text models are limited to the 5,000
words with the largest document frequency counts.  We provide experiments
for two data sets: CMU Newsgroups \cite{ng20}, and Classic3 \cite{classic3}.  Classic3 is a relatively small 3 class
problem with 3,891 instances.  The newsgroup data set is much larger, having
20 different classes from various newsgroup categories and 20,000 instances.

As mentioned earlier, our text experiments use a linear kernel, and we use a set of
basis vectors that is constructed from the class labels via the following
procedure.   Let $c$ be the number of distinct classes and let $k$ be the
size of the desired basis.  If $k = c$, then each class mean $r_i$ is computed
to form the basis $R = [ {\bm r_1} \ldots {\bm r_c} ]$. If $k<c$ a similar
process is used but restricted to a randomly selected subset of $k$ classes.
If $k > c$, instances within each class are clustered into  approximately
$\frac{k}{c}$ clusters.  Each cluster's mean vector is then computed to form
the set of low-rank basis vectors $R$.

\begin{figure*}[t]
\centering
\begin{tabular}{c}
$\begin{array}{c@{\hspace{.2in}}c}
\includegraphics[width=2.5in]{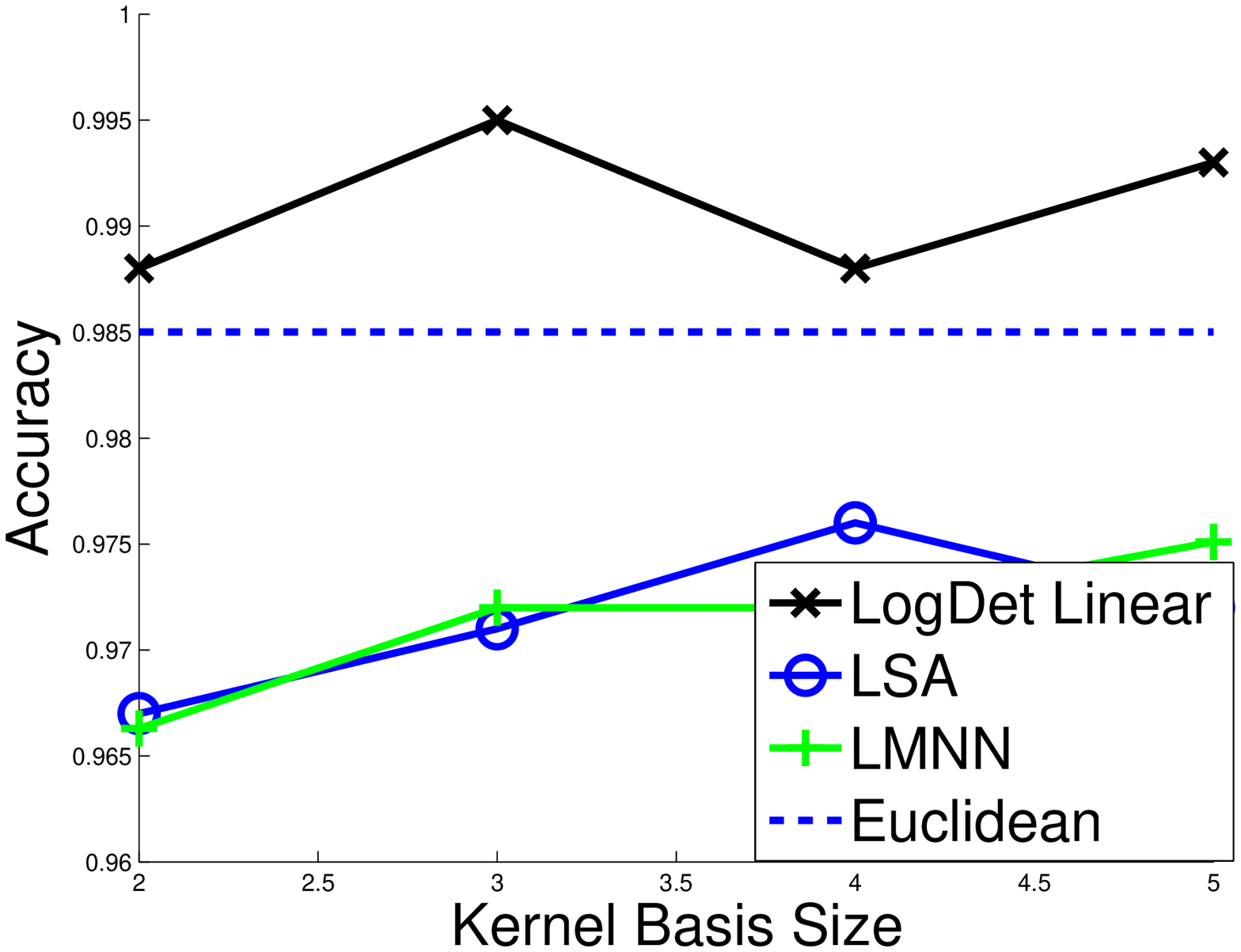} &
\includegraphics[width=2.5in]{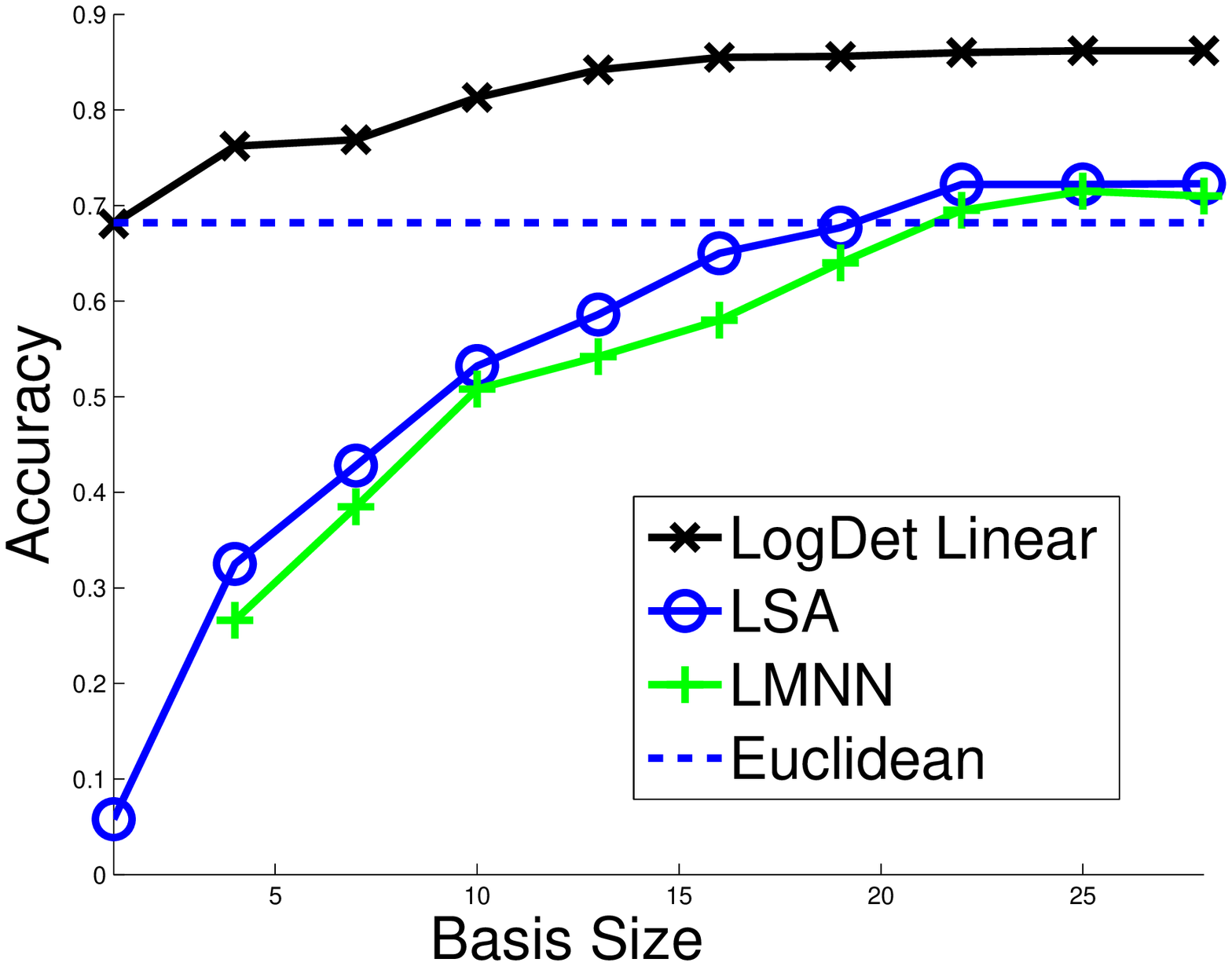} \\
(a) \mbox{ Classic3} & (b) \mbox{ 20-Newsgroups } 
\end{array}
$\\
\end{tabular}
\caption{Classification accuracy for our Mahalanobis metrics learned over basis of different dimensionality. Overall, our method (LogDet Linear) significantly outperforms existing methods.}
\label{fig:text_results}
\end{figure*}

Figure~\ref{fig:text_results} shows classification accuracy across bases of
varying sizes for the Classic3 dataset, along with the newsgroup data
set. As baseline measures, the standard squared Euclidean distance is shown,
along with Latent Semantic Analysis (LSA)~\cite{deerwester90lsa}, which works
by projecting the data via principal components analysis (PCA), and
computing distances in this projected space.  Comparing our algorithm to the
baseline Euclidean measure, we can see that for smaller bases, the accuracy
of our algorithm is similar to the Euclidean measure.  As the size of the basis increases, our method obtains significantly higher accuracy compared to the baseline Euclidean measure.


\section{Conclusions}
In this paper, we have considered the general problem of learning a linear
transformation of input data and applied it to the problem of learning a
metric over 
high-dimensional data or feature space implicitly. 
$\phi(\bm{x}_i)^T A \phi(\bm{x}_j)$.  
We first showed that the LogDet divergence is a useful loss for learning a linear transformation (or performing metric learning) in kernel space,
as the algorithm can easily be generalized to work in kernel space.  We then
proposed an algorithm based on Bregman projections to learn a kernel function over the data-points efficiently. We also show that our learned metric can be restricted to a small dimensional basis efficiently, hence scaling our method to large datasets with high-dimensional feature space.  Then we considered a
larger class of convex loss functions for learning the metric/kernel using a linear transformation of the data; we saw that many loss functions can lead to kernelization, though the resulting
optimizations may be more expensive to solve than the simpler LogDet
formulation.  Finally, we presented some experiments on benchmark data,
high-dimensional vision, and text classification problems, demonstrating our
method compared to several existing state-of-the-art techniques.

There are several directions of future work.  To facilitate even larger data
sets than the ones considered in this paper, online learning methods are one
promising research direction; in~\cite{onlinemetric_nips}, an online learning
algorithm was proposed based on LogDet regularization, and this remains a
part of our ongoing efforts.  Recently, there has been some interest in
learning multiple local metrics over the data;~\cite{weinnew} considered this
problem.  We plan to explore this setting with the LogDet divergence, with a
focus on scalability to very large data sets.
\section*{Acknowledgements}
This research was supported by NSF grant CCF-0728879. We would also like to acknowledge Suvrit Sra for various helpful discussions. 
{\footnotesize
\bibliographystyle{biburl}
\bibliography{refs}
}
\end{document}